\newtheorem{theorem}{Theorem}
\newtheorem{assumption}{Assumption}
\newtheorem{proposition}{Proposition}
\newtheorem{definition}{Definition}
\newtheorem{corollary}{Corollary}
\newtheorem{remark}{Remark}
\newcommand{\R}{\mathbb{R}}
\newcommand{\mb}[1]{\mathbf{#1}}
\title{\textbf{CATNIPS}: \textbf{C}\textmd{ollision} \textbf{A}\textmd{voidance}  \textbf{T}\textmd{hrough} \textbf{N}\textmd{eural} \textbf{I}\textmd{mplicit}
\textbf{P}\textmd{robabilistic}
\textbf{S}\textmd{cenes}}
\author{Timothy Chen$^1$, Preston Culbertson$^2$, Mac Schwager$^1$
\thanks{*The NASA University Leadership Initiative (grant \#80NSSC20M0163) provided funds to assist the authors with their research, but this article solely reflects the opinions and conclusions of its authors and not any NASA entity. Toyota Research Institute provided funds to support this work.  The first author was supported by a NASA NSTGRO Fellowship, and the second author was supported on a NASA NSTRF Fellowship. }% <-this % stops a space
\thanks{$^{1}$Department of Aeronautics and Astronautics, Stanford University, Stanford, CA 94305, USA
        {\tt\small \{chengine, schwager\}@stanford.edu}}%
\thanks{$^{2}$Department of Mechanical and Civil Engineering, California Institute of Technology, Pasadena, CA 91125, USA, {\tt\small pculbert@caltech.edu}}%
        }
\begin{document}
\maketitle

%===============================================================================
\begin{abstract}
    We introduce a transformation of a Neural Radiance Field (NeRF) to an equivalent Poisson Point Process (PPP). This PPP transformation allows for rigorous quantification of uncertainty in NeRFs, in particular, for computing collision probabilities for a robot navigating through a NeRF environment. The PPP is a generalization of a probabilistic occupancy grid to the continuous volume and is fundamental to the volumetric ray-tracing model underlying radiance fields.  Building upon this PPP representation, we present a chance-constrained trajectory optimization method for safe robot navigation in NeRFs. Our method relies on a voxel representation called the  Probabilistic Unsafe Robot Region (PURR) that spatially fuses the chance constraint with the NeRF model to facilitate fast trajectory optimization.  We then combine a graph-based search with a spline-based trajectory optimization to yield robot trajectories through the NeRF that are guaranteed to satisfy a user-specific collision probability.  We validate our chance constrained planning method through simulations and hardware experiments, showing superior performance compared to prior works on trajectory planning in NeRF environments. Our codebase can be found at \url{https://github.com/chengine/catnips}, and videos can be found on our project page (\url{https://chengine.github.io/catnips}). 
\end{abstract}

% KEYWORDS
\begin{IEEEkeywords}
Collision Avoidance, Robot Safety, Visual-Based Navigation, NeRFs 
\end{IEEEkeywords}

%===============================================================================

\section{Introduction}

\begin{figure}[]
         \centering         \includegraphics[width=0.49\textwidth]{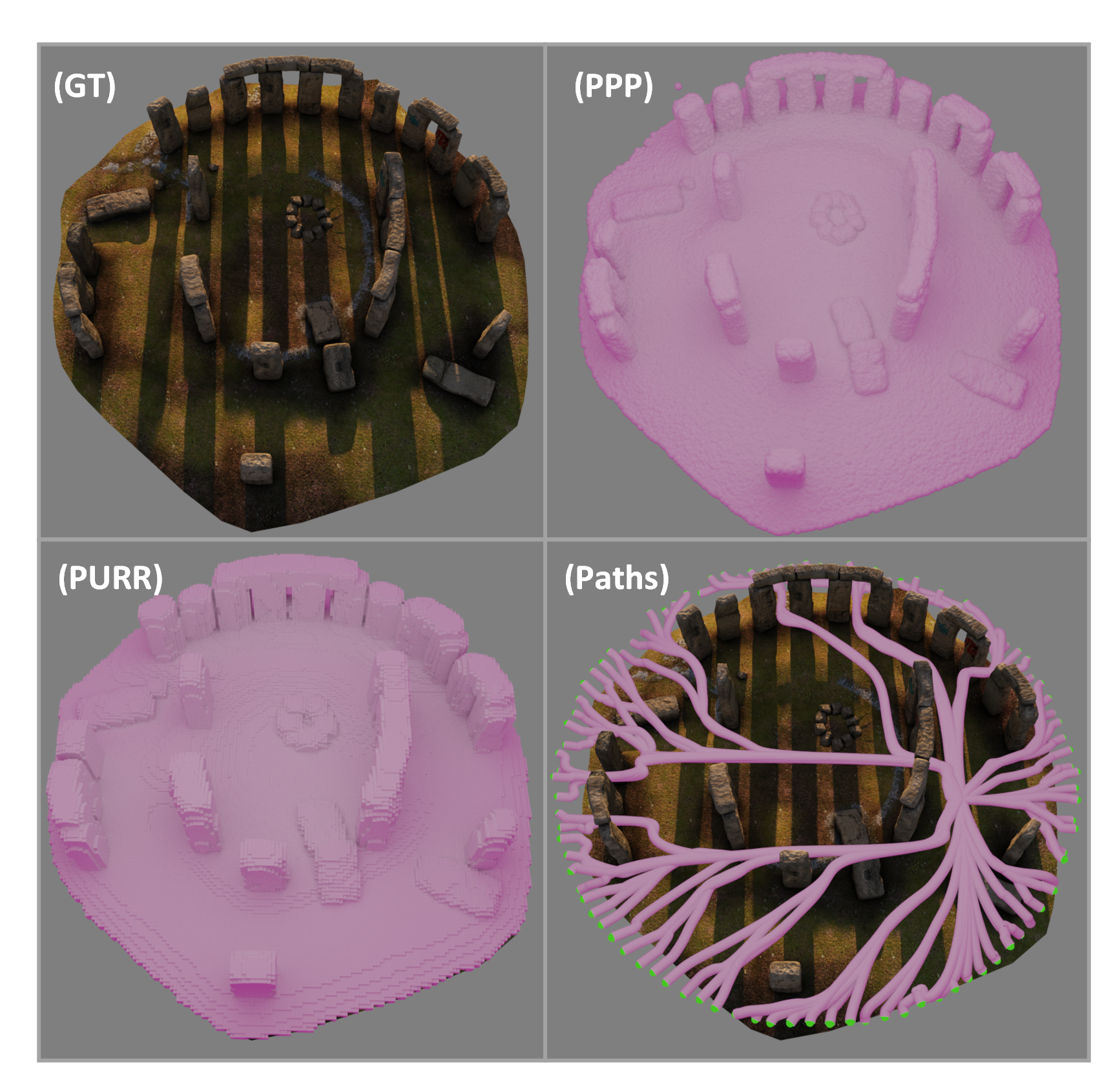}
        \caption{(a) Ground-truth of the Stonehenge scene, (b) Poisson Point Process (PPP) of the scene represented as a point cloud, (c) Probabilistically Unsafe Robot Region (PURR) of scene, (d) Generated safe paths from CATNIPS.}
        \label{fig:three graphs}
\end{figure}

Constructing an environment model from onboard sensors, such as RGB(-D) cameras, lidar, or touch sensors, is a fundamental challenge for any autonomous system. Recently, Neural Radiance Fields (NeRFs) \cite{mildenhall2020a} have emerged as a promising 3D scene representation with potential applications in a variety of robotics domains including SLAM \cite{sucarIMAPImplicitMapping2021}, pose estimation \cite{yen2021inerf, yen2022nerfsupervision}, reinforcement learning \cite{driess2022learning}, and grasping \cite{ichnowski2021dex}. NeRFs offer several potential benefits over traditional scene representations: they can be trained using only monocular RGB images, they provide a continuous representation of obstacle geometry, and they are memory-efficient, especially considering the photo-realistic quality of their renders. Using current implementations \cite{mueller2022instant, nerfstudio}, NeRFs can be trained in seconds using only RGB images captured from monocular cameras, making onboard, online NeRF training a viable option for  robotic systems.

However, NeRFs do not directly give information about spatial occupancy, which poses a challenge in using NeRF models for safe robot navigation. In other 3D scene representations, such as (watertight) triangle meshes \cite{edelsbrunner2003surface}, occupancy grids \cite{elfes1989using}, or Signed Distance Fields (SDFs) \cite{osher2003level}, occupancy is well-defined and simple to query. NeRFs, however, do not admit simple point-wise occupancy queries, since they represent the scene geometry implicitly through a continuous volumetric density field. For this reason, integrating NeRF models into robotic planners with mathematical safety guarantees remains an open problem.

To this end, we develop a framework for robot trajectory planning that can generate trajectories through a NeRF scene with probabilistic safety guarantees. To do this, we propose a mathematical transformation of a NeRF to a Poisson Point Process (PPP), which allows for the rigorous computation of collision probabilities for a robot moving through a NeRF scene. We further introduce a novel scene representation, a Probabilistically Unsafe Robot Region (PURR), that convolves the robot geometry with the NeRF to yield a 3D map of all robot positions with collision probabilities less than a user-specified threshold. Finally, we propose a fast, chance-constrained trajectory planner that uses the PURR to ensure the trajectories are collision free up to the user-specified probability threshold. Our method, called CATNIPS, can compute probabilistically safe trajectories at more than 3 Hz. This is many times faster than existing NeRF-based trajectory planners that provide no safety guarantees \cite{adamkiewicz2022vision}.  

The key theoretical advance underpinning our results is the novel transformation of the NeRF into a PPP.  Existing works on radiance fields either ignore the underlying probabilistic interpretation of the field or treat it as a nuisance.  %Some works constrain the expected distance between robot and the NeRF scene \cite{tong2022enforcing} or represent the environment geometry using a triangle mesh approximating a given level set of the NeRF density \cite{sucarIMAPImplicitMapping2021}.  
A naive approach is to convert the NeRF representation into a more traditional deterministic mesh or occupancy representation.  We argue that such conversions are computationally slow, and they destroy any potential mathematical safety guarantees for a downstream planner.  For example, generating a triangle mesh (e.g., using marching cubes \cite{lorensen1987marchinga}) that represents a level set of the density field requires the arbitrary selection of a density cutoff value, and collapses the uncertainty represented by the density field into a binary occupancy measure.  In contrast, our method computes rigorous collision probabilities using the NeRF density directly. %In particular, by analyzing the volumetric rendering process used to generate images from the NeRF, we show the density field can be interpreted as a Poisson Point Process (PPP), a stochastic process on $\mathbb{R}^3$. The PPP allows us to quantify the uncertainty distribution of obstacle geometry in the environment and to directly compute the probability that any portion of a robot body is in collision with the environment. We propose a new map representation (the PURR) obtained directly from the NeRF based on these collision probability computations, and give an accompanying planning method for planning trajectories for a robot through the scene that are guaranteed to satisfy a user-specified collision probability constraint. 

We provide simulation studies to show that our planner generates safe, but not overly-conservative, trajectories through the environment. We contrast our paths to those generated using a level-set based environment representation and those from prior work \cite{adamkiewicz2022vision}. We find that the paths our method generates are more intuitive and easier to tune than these baselines, as collision is interpretably defined through violation of a collision probability as opposed to violation of an arbitrary level set of the density. We show our method to be real-time, replanning online at 3 Hz on a laptop computer, compared to the gradient descent-based planner proposed in \cite{adamkiewicz2022vision}, which requires approximately 2 seconds for replanning. 

The rest of this paper is organized as follows.  In Section~\ref{Sec:RelatedWork} we discuss related work.  In Section~\ref{Sec:Background} we review background concepts from  NeRFs, and in Section~\ref{Sec:PPP} we derive the Poisson Point Process interpretation of the NeRF.  In Section~\ref{Sec:CollisionProbability} we compute collision probabilities for a robot in a NeRF environment, and in Section~\ref{Sec:CollisionAvoidance} we present our trajectory planning algorithm, CATNIPS.  Section~\ref{Sec:Results} gives our simulation results and Conclusions are in Section~\ref{Sec:Conclusion}.

\section{Related Work}
\label{Sec:RelatedWork}
Here we review the related literature in robot planning and control with NeRF representations, compare it with planning in a Signed Distance Function (SDF) representation, discuss other uses of NeRFs in robotics, and summarize chance-constrained planning.

\subsection{Planning and Control with Onboard Sensing and SDFs}
Planning and control based on onboard sensing has already yielded a large amount of literature. Typically these works present reactive control schemes \cite{perceptionaware}, using the sensed depth directly to perform collision checking in real-time. These methods typically are myopic, reasoning only locally about the scene. An alternate approach is to construct a map of the environment using the depth measurements. Often a Signed Distance Field (SDF) is constructed from depth data \cite{voxblox}, \cite{fiesta}, which in this work is encoded within voxels. \cite{fiesta} also integrates their system onboard a quadrotor to validate their method. Such a representation is typical in dynamic robotic motion planning, providing fast collision checking and gradients in planning. 

We believe NeRF is a promising alternative to more familiar 3D geometry representations like SDFs due to some key NeRF properties. We show that the NeRF inherently encodes uncertainty in the environment, whereas  SDFs are typically deterministic. Moreover, we find that deep network SDFs are difficult to train, often requiring synthetic training points with heuristically generated, error-prone depth labels. In contrast, NeRFs can be supervised directly from RGB images, and can be trained reliably and quickly with NeRF training packages such as \cite{nerfstudio}. The modularity of NeRFs in perception pipelines, especially those involving visual data, is another benefit of NeRFs. However, this is not to say that the two cannot coexist. There exists in the literature deep learning architectures that simultaneously learn SDF and NeRF outputs based on empirical consistency between the two (e.g., NeuS \cite{wang2021neus}). We hope that our probabilistic interpretation of NeRFs can help bridge the gap between these two representations and enable future pipelines to access advantages of both representations.

\subsection{Planning and Control using NeRFs}

Safety has been a largely unexplored topic in the NeRF literature, with only preliminary approaches being studied in simulation. The authors' previous work NeRF-Nav \cite{adamkiewicz2022vision} presents a planner that avoids collisions in a NeRF environment model by avoiding high-density areas in the scene. An alternative work \cite{tong2023enforcing} instead uses the predicted depth map at sampled poses to enforce step-wise safety using a control barrier function. The two methods are not at odds, as the philosophy of \cite{adamkiewicz2022vision} serves as a high-level planner that encourages non-myoptic behavior while \cite{tong2023enforcing} can be used as a safety filter for a myopic low-level controller interfacing directly with the system dynamics.

More specifically, NeRF-Nav \cite{adamkiewicz2022vision} adapts trajectory optimization tools to plan trajectories for a robot through a NeRF environment. Collisions are discouraged with a penalty in the trajectory cost, but the probability of collision is not quantified or directly constrained. In this work, we instead rigorously quantify collision probabilities for a robot in a NeRF, and develop a trajectory planning method to satisfy user-defined chance constraints on collision. In addition, NeRF-Nav requires about 2 seconds for each online trajectory re-solve, while our proposed method requires about $0.3$ seconds per online trajectory re-solve on similar computing hardware.  %The previous work considered collisions as a penalty with a weight (which may not avoid collisions at all depending on the size of the weight with respect to the rest of the cost), but since collision is catastrophic in most of the considered applications it is desirable to be able to impose this as a hard constraint. Moreover, the proposed method is real-time, feasible, and does not suffer from local optima, in contrast to the slow gradient descent method of \cite{adamkiewicz2022vision} and the potential infeasibility of solving for plans in \cite{tong2022enforcing}.

\subsection{Other Uses of NeRFs in Robotics}
Some works have considered NeRFs as a 3D scene representation for robotic grasping and manipulation. For example, Dex-NeRF \cite{ichnowski2021dex} uses NeRF-rendered depth images to obtain higher-quality grasps for a robot manipulator than using a depth camera. Similarly, one can use dense object descriptors supervised with a NeRF model  for robot grasping \cite{yen2022nerfsupervision}. %Recently, grasp metrics accounting for the uncertainty in NeRFs (CITE Preston's paper if available) have also been proposed.  

Some works have also considered SLAM and mapping using a NeRF map representation. The papers  \cite{sucarIMAPImplicitMapping2021}, \cite{yen2021inerf} use the photometric error between rendered and observed images to simultaneously optimize the NeRF weights and the robot/camera poses. The approach in \cite{nice-slam} uses a grid-interpolation-decoder NeRF architecture in a similar SLAM pipeline. The work \cite{nerf-slam} proposes a combination of an existing visual odometry pipeline for camera trajectory estimation together with online NeRF training for the 3D scene.  NeRFs have also been used for tracking the pose of a robot using an on-board camera and IMU.  For example, iNeRF \cite{yen2021inerf} finds a single camera pose from a single image and a pre-trained NeRF model, and \cite{adamkiewicz2022vision} proposes a nonlinear optimization-based filter for tracking a trajectory of an on-board camera using a sequence of images and a pre-trained NeRF. Loc-NeRF \cite{loc-nerf} approaches a similar problem using a particle filter instead of a nonlinear optimization-based filter.

Other papers have considered active view planning for NeRFs. ActiveNeRF \cite{active-nerf} treats the radiance value of the NeRF as Gaussian distributed random variables, and performs Bayesian filtering to find the next best view. An alternative work \cite{nerfensembles} uses disagreement among an ensemble of NeRFs to choose the next best camera view. Similarly, \cite{densitynerf} uses ensembles in a next best view strategy while also adding ray-termination densities to the information gain metric. S-NeRF \cite{snerf} uses variational inference to train a probability distribution over NeRFs for next best view selection. ActiveRMAP \cite{activermap} considers a full informative trajectory planning pipeline for a robot moving through a NeRF. In contrast to our work here, they do not focus on the safety of the trajectories or on quantifying collision probabilities. Perhaps the closest in spirit in terms of modelling uncertainty is Bayes' Rays \cite{goli2023bayes}, which reasons locally about epistemic uncertainty (i.e. the distribution over NeRF parameters) and differs from our work that pins down the distribution that the NeRF model represents. However, we envision future efforts that incorporate both works to fully explain geometric uncertainty conditioned on RGB data.

Of course, many of these works would not be applicable to robotics if they were not real-time. Massive performance gains have been made to train NeRFs in real-time \cite{mueller2022instant}, \cite{plenoxels}, \cite{plenoctrees}. Moreover, NeRFs must be able to capture reality as well. They are known to suffer in quality when reconstructing rich, real environments over a large range of length scales. Attempts have been made to fix this issue by extrapolating over the entire camera frustum rather than a ray \cite{mipnerf}, \cite{mipnerf360}. 

\subsection{Chance-constrained Planning}
Outside of NeRFs, there exists a large literature on trajectory planning for robots that seeks to impose constraints on the probability of collision when the underlying scene geometry is unknown; this approach is known as chance-constrained planning. The robot state is typically modeled as stochastic, while the map is typically considered to be known deterministically. Some works do consider uncertainty in both the map and the robot state, but they typically rely on a linear system or Gaussian noise assumption to make computation convex or analytical and efficient to solve. Du Toit and Burdick \cite{toitCC} assume Gaussian-distributed obstacle states, and approximates the collision probability as constant over the robot body (suitable only for small robots). Blackmore et al. \cite{blackmoreCC} encode the probability of collision with faces of polytopic obstacles as a linear constraint, but the resulting trajectory optimization is a combinatorial problem, making it difficult to solve quickly. Zhu and Alonso-Mora \cite{zhuCC} incorporate this linear probabilistic constraint into RRT. Luders et al. \cite{ludersCC} again use this linear constraint in an MPC framework with nonlinear dynamics, executed on real hardware with dynamic obstacles and extended to a multi-agent context.  None of these methods consider NeRF environment models, which is our focus here.

\section{Neural Radiance Fields (NeRFs)}
\label{Sec:Background}
In this section, we introduce the mathematical preliminaries and notation used in NeRFs. For clarity, we use bold face for vector variables and functions that output vectors, and non-bold text for scalar variables, functions that output scalars, and---in some instances---rotations.

A NeRF is a neural network that stores a density and color field over the 3D environment.  When coupled with a differentiable image rendering model (usually a differentiable version of ray tracing), the NeRF can be trained from a collection of RGB images with known camera poses, and can generate photo-realistic synthetic images rendered from camera view points that are different from the training images. 

More specifically, the NeRF is a pair of functions $(\rho(\mathbf{p}),c(\mathbf{p},\mathbf{d}))$.  The density function, $\rho:\mathbb{R}^3 \mapsto \mathbb{R}_{\ge 0}$, maps a 3D location $\mathbf{p} = (x, y, z)$ to a non-negative density value $\rho$ that encodes the differential probability of a light ray stopping at that point.\footnote{This density field can be stored entirely as a multi-layer perceptron (MLP), as in the original NeRF work \cite{mildenhall2020a}, as a function interpolated on a discrete voxel grid \cite{plenoxels}, or using a combination of interpolated voxel features and an MLP decoder \cite{mueller2022instant,plenoctrees}.  Our method can work with any of these representations.}  The radiance (i.e., RGB color) function $\mb{c}:\mathbb{R}^3 \times \mathbb{R}^2 \mapsto \mathbb{R}^3$ maps a 3D location $\mathbf{p} = (x, y, z)$ and camera view direction $\mathbf{d} \in \{\bold{x} \in \mathbb{R}^3 \;|\; ||\bold{x}|| = 1\}$ (alternatively parameterized as a 2D vector of angles $(\theta, \phi)$) to an emitted RGB color $\mb{c}$ represented as a vector in $\mathbb{R}^3$. In this paper, we focus specifically on the density function $\rho(\mathbf{p})$ as a proxy for occupancy, which should ideally be zero in free space and take on large values in occupied space. 
 We use this $\rho(\mathbf{p})$ function as a map representation for planning robot trajectories.  We also define $\mb{C}(\mathbf{o}, \mathbf{d}) \in [0, 1]^3$ as the rendered pixel color in an image when taking the expected color value from the NeRF along a ray $\mathbf{r}(t; \mathbf{o}, \mathbf{d})$ with camera origin $\mathbf{o}$ and pixel orientation $\mathbf{d}$, where $\mathbf{r}(t) = \mathbf{o} + t\cdot \mathbf{d}$. The rendered color is given by

\begin{equation}
\label{Eq:RayTracing}
\mb{C}(\mathbf{o}, \mathbf{d})= \int_{t_n}^{t_f} \rho\left(\mathbf{r}(t)\right) \; e^{-\int_{t_n}^{t} \rho(\mathbf{r}(\tau)) d\tau} \; \mathbf{c}(\mathbf{r}(t), \mathbf{d})\; dt,
\end{equation} 
where we only integrate points along the ray between $t_n$ and $t_f$ (i.e. the near and far planes).  In practice, this integral is evaluated numerically using Monte Carlo integration with stratified sampling.  The resulting rendering equation (\ref{Eq:RayTracing}) is differentiable.

A rendered image $I_i$ is then an array of pixel colors associated with a single camera pose, where the color of pixel $j$ in image $I_i$ is given by $\mb{C}(\mathbf{o}_i, \mathbf{d}_{ij})$ with associated origin $\mathbf{o}_i$ (determined by the camera) and direction $\mathbf{d}_{ij}$ computed with an angular offset from the camera optical axis for pixel $j$. We denote the set of pixel indices for image $I_i$ as $\mathcal{I}_i$.  The corresponding ground truth image $\bar{I}_i$ is an array of pixels with colors $\mb{\bar{C}_{ij}}$.  A dataset D for training a NeRF consists of a collection of such ground truth images with known poses. The parameters of the NeRF are trained by minimizing the loss function
\begin{equation}
\label{Eq:PhotometricLoss}
J(\boldsymbol{\theta}) = \frac{1}{|D|}\sum_{i\in D}\frac{1}{|\mathcal{I}_i|}\sum_{j\in \mathcal{I}_i}||\mb{C}(\mathbf{o}_i, \mathbf{d}_{ij}; \boldsymbol{\theta}) - \mb{\bar{C}_{ij}}||_2^2,
\end{equation}
where $\boldsymbol{\theta}$ are the parameters of the neural networks representing the density and radiance fields $\rho$ and $\mb{c}$, which appear in the computation of the pixel color $\mb{C}(\mathbf{o}_i, \mathbf{d}_{ij}; \boldsymbol{\theta})$ through the rendering equation (\ref{Eq:RayTracing}). This mean squared error is called the photometric error (or photomoetric loss) and is optimized with standard stochastic gradient descent tools in, e.g., Pytorch. Intuitively, the goal is to train the network so that the synthetic images generated from the NeRF match the training images at the specified camera poses as closely as possible.

While the camera poses are required to find $\mathbf{o}_i$ and $\mathbf{d}_{ij}$ for each pixel to train the NeRF, a standard pipeline has emerged that takes images without camera poses, uses a classical structure-from-motion algorithm (e.g., COLMAP \cite{schonberger2016structure}) to estimate the camera poses, and supervises the NeRF training with these poses. Recent methods also optimize the camera poses jointly with the NeRF weights to improve performance \cite{nerfstudio}. 

Hence, in practice a NeRF model can be obtained from only RGB images (without camera poses).  However the quality and extent of the NeRF is limited by the quality of the training images, and the volume covered by those images.  Few images, with low resolution, low photographic quality, and poor coverage will yield a poor-quality NeRF.  A large number of sharp, high-resolution images from a rich diversity of view points will yield a high-quality NeRF.  Our goal is to accurately quantify collision risk for a robot navigating through the scene regardless of the quality of the trained NeRF.  With our approach, the same robot pose in the same 3D scene may have a high collision probability in a poor-quality NeRF, and a low collision probability in a high-quality NeRF.  The probability of collision is itself an expression of the NeRF quality in the vicinity of the robot.

% This compact encoding has been used for robotics applications such as pose estimation \cite{yen2021inerf}, SLAM \cite{sucarIMAPImplicitMapping2021}, navigation \cite{adamkiewicz2022vision}, grasping \cite{ichnowski2021dex} among many others. Applications of NeRFs, and more broadly neural rendering, extend beyond robotics. New technologies such as self-driving cars, augmented reality, and unmanned drone delivery all require algorithms to be able to spatially reason about objects in the environmental, and NeRFs have several advantages over traditional representations such as compact memory, photorealistic rendering, and differentiability.

\section{NeRF Density as a Poission Point Process}
\label{Sec:PPP}
In this section, we show that a NeRF density field can be transformed into the density of a Poisson Point Process (PPP), and the NeRF color and density fields together give rise to a ``marked'' PPP \cite{Kingman:1993, chiu2013}. To do this, we demonstrate that the NeRF volumetric rendering equation is precisely the computation that is required to compute expected pixel color if the color and density under this marked PPP model. Training the NeRF can be interpreted as fitting the PPP density parameters through moment matching on the expected pixel color.

This connection is significant since the PPP derived from the NeRF density field enables computation of probabilistic quantities, such as the probability of a given volume being occupied (e.g., of a robot body colliding with the NeRF), or the entropy in the NeRF model. This also settles a debate in the literature about whether the NeRF density can be probabilistically (it can), and paves the way for practical utility in other domains beyond safety (e.g., in active sensing and active view planning). In short, we find that the NeRF density encodes a probabilistic model of the geometry of the scene, the uncertainties of which can be rigorously quantified through an underlying PPP.

\subsection{Poisson Point Processes}

\label{subsec:ppp_background}

Here we review the definition and properties of the Poisson Point Process (PPP), a stochastic process that models the distribution of a random collection of points in a continuous space. Much of this discussion is drawn from \cite{Kingman:1993}, to which we refer the reader for a more detailed and rigorous treatment.

First, we recall that a discrete random variable (RV) $N$ that takes values in $\mathbb{N}$ is said to have a Poisson distribution with parameter $\lambda \geq 0$ if its probability mass function is given by
\begin{equation*}
Pr(N = m) = \frac{\lambda^m \exp(-\lambda)}{m!}.
\end{equation*}
Poisson RVs are often used to model the distribution of the number of discrete events in a fixed amount of time (e.g., customers arriving at a store), or over a fixed region of space (e.g. the number of rides hailed daily in a given neighborhood). The PPP naturally extends this concept to the distribution over the number of points in any subset of a multi-dimensional Euclidean space.  
%\begin{definition}[Homogeneous Poisson Point Process]
%Consider a random process $N$ on $\R^n$ that maps Lebesgue measureable subsets $B \subset \R^n$ of the state space to the (random) number $N(B)$ of points that lie in $B$. We say $N$ is a Poisson Point Process (PPP) with intensity $\lambda > 0$ if:
% \begin{enumerate}[(i)]
% \item The number of points $N(B)$ that lie in $B$ is a Poisson RV with distribution
% \begin{equation*}
% Pr(N(B) = m) = \frac{\mu^m \exp(-\mu)}{m!},
% \end{equation*}
% where $\mu = \lambda \lvert B \rvert$, and $\lvert B \rvert$ denotes the Lebesgue measure of $B$.
% \item For $k$ disjoint subsets $B_1, \ldots B_k \subset \R^n$, the number of points in each subset, $N(B_1), \ldots, N(B_k)$, are independent RVs. 
% \end{enumerate}
% \end{definition}
\begin{definition}[Poisson Point Process]
\label{def:ippp}
Consider a random process $N$ on $\R^n$ that maps subsets\footnote{The subsets $B$ must be Lebesgue measureable.} $B \subset \R^n$ of the state space to the random number $N(B)$ of points that lie in $B$. We say $N$ is a Poisson Point Process (PPP) with intensity $\lambda: \R^n \mapsto \R_{+}$ if:
\begin{enumerate}[(i)]
\item The number of points $N(B)$ that lie in $B$ is a Poisson RV with distribution
\begin{equation*}
Pr(N(B) = m) = \frac{\Lambda(B)^m \exp(-\Lambda(B))}{m!},
\end{equation*}
where $\Lambda(B) = \int_{\mb{x}\in B} \lambda(\mb{x}) \; d\mb{x}$. 
\item For $k$ disjoint subsets $B_1, \ldots B_k \subset \R^n$, the number of points in each subset, $N(B_1), \ldots, N(B_k)$, are independent RVs. 
\end{enumerate}
\end{definition}
This is sometimes refered to as the \emph{inhomogeneous} PPP since the intensity $\lambda$ is a function of the spatial variable $\mb{x}$.  If the intensity is constant over $\mb{x}$, this is called the \emph{homogeneous} PPP. 

The PPP encodes the randomness over both the \emph{number} and the \emph{location} of random points. An important quantity for such processes is the ``void probability,'' or the probability that a given set $B$ is empty. The void probability is given by
\begin{equation}
Pr(N(B) = 0) = \exp\left[-\int_{B}\lambda(\mb{x}) \; d\mb{x}\right]. \label{eq:void}
\end{equation}
Thus, intuitively, the void probability shrinks as either the intensity $\lambda$ increases, or the set $B$ grows larger.

It is also important to note that through the Poisson distribution,  the expected number of points in the set $B$ is identical to the integral of the intensity, in other words,
\begin{equation}
\mathbb{E}\left[ N(B)\right] = \int_{B}\lambda(\mb{x}) \; d\mb{x}.
\label{eq:expected_points}
\end{equation}

When reasoning about collision probability, we want each point associated with the PPP to have a corresponding volume. Using the above fact \eqref{eq:expected_points}, we can consider points or particles of arbitrary size by weighting the PPP accordingly.

Given some reference particle size $V_{ref}$ associated with the initial PPP, a desired particle size $V_d$, an integration domain $B$, and the expected volume occupied by all the particles $\mathbb{E}\left[V_{total}\right]$, we can retrieve $\mathbb{E} \left[N_d(B) \right]$, the expected number of particles of size $V_d$, by conservation of $\mathbb{E}\left[V_{total}\right]$. Namely, 
\begin{align}
% \begin{split}
V_{ref} \mathbb{E} \left[N(B) \right] &= V_{ref} \int_{B}\lambda(\mb{x}) \;d\mb{x} = \mathbb{E}\left[V_{total}\right]\nonumber \\
&= V_d \mathbb{E} \left[N_d(B) \right]\nonumber,\\
\mathbb{E} \left[N_d(B) \right] &= \frac{\int_{B}\lambda(\mb{x})\;d\mb{x}}{V_d / V_{ref}}.
\label{eq:PPP_weighting}
% \end{split}
\end{align}
Therefore, a PPP with the same expected occupied volume can be produced using differing particle sizes by simply scaling the density.

Finally, we note that PPPs may be ``marked'' or ``colored'' with various quantities using a deterministic labeling function $\mb{c}(\mb{x}): \R^n \mapsto \mathcal{C}$. Using the statistics of the underlying PPP, it is straightforward to compute the statistics of the labels for the points appearing in a set; see \cite[Ch.~5]{Kingman:1993} for a detailed discussion.

\subsection{``Rendering'' a Marked PPP}\label{ppp-proof}

\begin{figure}[]
    \centering
         \includegraphics[width=\columnwidth]{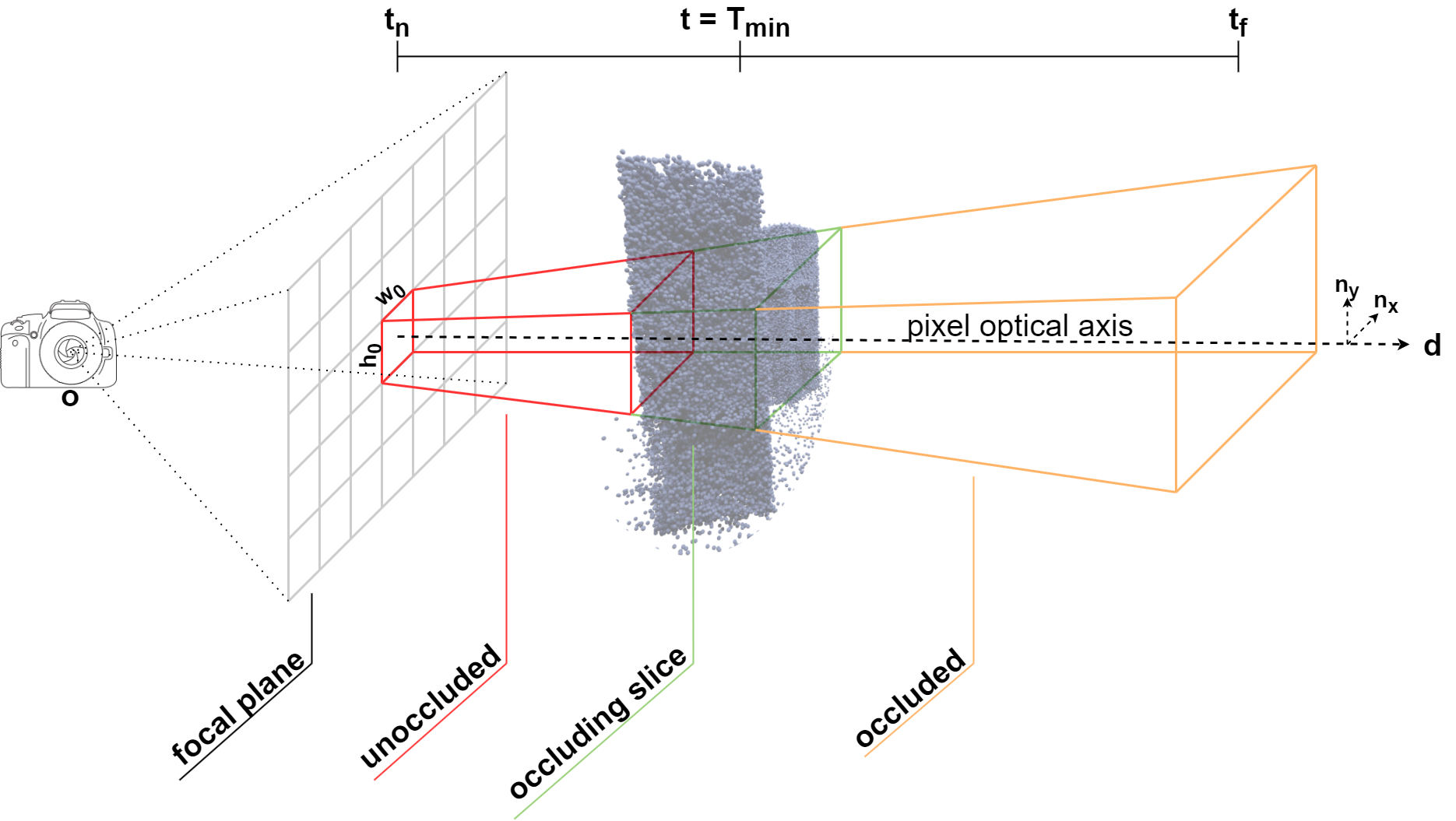}
        \caption{In the rendering process, the probability that the pixel color takes on the color of the infintesimally small occluding slice (green) is given by the probability that all slices in the region preceding  the slice (red) are unoccluded. Then, the pixel color is the expectation of the color taken by varying the position of the occluding slice along the ray.}
    \label{fig:rendering_explainer}
    \vspace{-2em}
\end{figure}

The intensity function of a PPP admits an infinitesimal interpretation: $\lambda(\mb{x}) \; d\mb{x}$ is the probability that a point of the process lies within an infinitesimal volume $d\mb{x}$ centered at $\mb{x}$. This is closely related to the interpretation of the density field offered by the original NeRF authors \cite{mildenhall2020a}: ``$\rho(\mb{x})$ defines the infinitesimal probability of a ray terminating at a given point $\mb{x} \in \R^3$.'' In this section, we show the volumetric rendering procedure introduced in \cite{mildenhall2020a} in fact computes an expected color of a marked PPP along a given ray.

Our key problem is how to relate the ray tracing used in NeRF, a 1D process, to the 3D measure of occupancy, $\lambda(\mb{x})$ in the PPP. We consider the occupancy swept by a frustum (the pyramid of light that projects onto a single pixel patch).  In particular, when generating the color of a particular pixel from a marked PPP, we extend a pyramid along the pixel's ray (Fig. \ref{fig:rendering_explainer}), and return the color of the first point of the process encountered along the ray in expectation. 

% To do this, we model ``rendering''  as beam tracing \cite{heckbert1984}, which considers the occupancy of ``thick beams'' (i.e., the pixel area swept along a particular ray) rather than the occupancy of infinitesimal rays. In particular, when generating the color of a particular pixel from a marked PPP, we extend a rectangular prism along the pixel's ray (Fig. \ref{fig:rendering_explainer}), and return the color of the first point of the process encountered along the ray. 

More specifically, for each pixel in the image, with associated ray $\mb{r}(\mb{o}, \mb{d})$, we consider the frustum (Fig. \ref{fig:rendering_explainer}, red) parameterized by a length $t \in [t_n, t_f],$

\begin{align}
    \mathcal{F}(t) \equiv \left\{\mb{o} + \mb{d}\tau + x \hat{\mb{n}}_x + y\hat{\mb{n}}_y \mid \lvert x \rvert \leq \frac{w(\tau)}{2}, \right. \nonumber\\ \left. \lvert y \rvert \leq \frac{h(\tau)}{2}, \tau \in [t_n, t] \right\}
\label{eq:beam_def}
\end{align}

where $\hat{\mb{n}}_x, \hat{\mb{n}}_y$ are unit vectors orthogonal to $\mb{d}$ forming a basis in the image plane and $h, w$ are the height and width of the frustrum cross-section, respectively, starting from the size of the pixel $(h_0, w_0)$ on the image plane. 

However, there still remains a modeling choice: given a fixed particle size $V_\text{ref}$, how many particles must be present in a given cross-section of the frustum for light to be occluded? We say the ray is occluded when the combined frontal area of the particles present at depth $t$ occupies a given fraction $\gamma \in (0, 1]$ of the frustum's cross-section.

Since the frustum's area $A(t)$ varies with depth, for particles of a fixed size, the number of particles needed to occlude the ray would also vary with $t$. However, as previously discussed, reweighting a PPP is equivalent to changing the particle size. Thus, in this work we consider ``dimensionless'' particles whose (projected) area on the frustum is exactly $\gamma A(t)$ (so only one particle is needed to occlude the ray) by reweighting the PPP density accordingly along the ray. Thus, we say the ray terminates at the depth of the first ``dimensionless'' particle encountered along the ray.

% We say the ray terminates wherever it first encounters a point of the process, i.e., at the first depth $t$ where the set $\mathcal{B}(t)$ is nonempty.

\subsection{Equivalence of NeRF and PPP Rendering}

This brings us to our main result. Here we show that, under appropriate assumptions on the distribution of the training rays and the spatial variation of the NeRF density and color, the color of a given pixel in an image rendered from a NeRF (\ref{Eq:RayTracing}) is exactly the expected color of the same pixel rendered from a PPP with (scaled) intensity equal to the NeRF density $\rho$, and marking equal to the NeRF radiance $\mb{c}$. We then provide intuition on the spatial relation between $\lambda$ and $\rho$. 

\begin{assumption}[PPP Smoothness]
\label{Ass:Smooth}

Consider a Poisson Point Process $\lambda(\mb{x})$ and color marking $\mathbf{c}(\mb{x},\mathbf{d})$. We assume the average of the PPP density over any ball $B_{\epsilon}$, with center $\mb{x_\epsilon}$, is equal to the value of the PPP density at the center of the ball, where $\epsilon$ is the minimum radius ball required to contain a pixel projected from the near plane onto the far plane of the NeRF scene. We also assume the color field $\mathbf{c}(\mb{x}, \mathbf{d})$ is approximately constant over any $B_\epsilon$.  Specifically, 
\begin{equation}
\label{Ass1:Density}
\int_{B_{\epsilon}} \lambda(\mb{x})\;  d\mb{x} = \lambda(\mb{x_\epsilon}) V_{B_{\epsilon}}
\end{equation}
and
\begin{equation}
\label{Ass1:Color}
\forall (\mb{x}, \mb{y}) \in B_{\epsilon}: \; ||\mathbf{c}(\mb{x}, \mathbf{d}) - \mathbf{c}(\mb{y}, \mathbf{d})|| = 0,
\end{equation}
where $B_\epsilon$ is the smallest ball that can contain an image pixel projected from the near plane onto the far plane of the NeRF scene.

\end{assumption}

This assumption states that the variation in the PPP over a small ball integrates to 0, while the color is constant in that same region.\footnote{Note that we do not require the density assumption for NeRF-PPP equivalence (Prop. \ref{Prop:ppp_rendering}, \ref{prop:nerfppp-equal}). Its use is in extracting an approximate scaling factor to transform between the density $\rho$ and PPP intensity $\lambda$ (Cor. \ref{corr:lambda2rho}).}  These are both mathematical idealizations which are unlikely to hold exactly in practice.  However, we find empirically that these assumptions are very close to being satisfied, ultimately yielding well-calibrated collision probabilities. For example, for a scene with length scale 2 meters, a camera with focal length 50 mm, and a $1000 \times 1000$ pixel image, the far plane pixel has side length on the order of $10^{-3}$m, requiring $\epsilon = \sqrt{2}$ mm.  Empirically, this is consistent with the smallest resolution of detail in a well-trained NeRF scene of a $2$ m$^3$ volume. 

Assumption \ref{Ass:Smooth} is reasonable due to the loss of information when encoding the continuous environment into the discretized observation space of pixelated images. Due to the resolution of the camera, color is constant across a pixel, hence we do not have information to distinguish generating environments whose colors only differ over the length scales of a single pixel. Since reconstructing the continuous geometry given the pixel-discretized images is ill-posed, we see this smoothness requirement as a kind of regularization prior for the density and color fields.  

Moreover, we tolerate stricter assumptions on the color because, in practice, the radiance field (i.e., color) is also smoother than the density field. The primary reason is that the radiance field is defined even in regions of empty space and is therefore allowed to smoothly change across surfaces; on the other hand, the density must change more sharply across surfaces in order to reflect the underlying discrete change in geometry. This is indeed reflected in the literature, where the original NeRF work \cite{mildenhall2020a} uses a smaller network to model color and later extensions like \cite{plenoxels} replace the network with smooth low-order spherical harmonics.

\begin{assumption}[Ray Redundancy]
\label{Ass:RayRedundant}
Given a dataset of training rays derived from  training images, poses, and camera intrinsics, no two rays intersect. 
\end{assumption}

This is a weak assumption as ray intersection is a zero-measure event. Moreover, floating point precision and noise in the poses further reduces the likelihood of ray intersection. %\textcolor{red}{Mac: I am commenting this out.  If we want to state e different assumption (e.g., no ray are colienar), then we state it.  It just creates confusion to state an assumption, and then say we actually could have stated a different weaker assumption.  Pick a line of argument and stick to it.} Finally, this assumption is not absolutely necessary to derive a PPP from the  NeRF, as we can choose to simply ignore the points of intersection while the integrals over the PPP or the NeRF density remain the same. In fact, we only require that rays not be co-linear.  

% \pc{This seems a little muddy -- maybe we should just make the assumption that rays aren't colinear? We could still include the remark about intersections being zero-measure.}

Given these assumptions, we now state our main result.

% \begin{proposition}[PPP-Density Equivalence]
% \label{Prop:ppp-density-equal}
% Consider a spatial density field $\rho(\mathbf{x})$ and a Poisson Point Process intensity field $\lambda(\mb{x})$. Under Assumption ~\ref{Ass:RayRedundant}, then there always exists an intensity field that describes the density field and vice versa. Specifically,

% \begin{align*}
% \int_{-\frac{x_0 + \delta_x}{2}}^{\frac{x_0 + \delta_x}{2}} \int_{-\frac{y_0 + \delta_y}{2}}^{\frac{y_0 + \delta_y}{2}} \lambda(\mb{r}(\tau) + x \hat{\mb{n}}_x + y \hat{\mb{n}}_y) \,dx\, dy\,  \equiv \rho(\mb{r}(\tau)) \; \forall \; \tau
% \end{align*}

% \end{proposition}

% \begin{proof}
% The existence of $\rho$ given $\lambda$ is immediate by definition. Given $\rho$, we seek to find a suitable $\lambda$. Taking the second derivative of $\rho$ and applying the Leibniz integral rule, we retrieve 

% \begin{align*}
% \sum_{i=0}^1 \sum_{j=0}^1 (-1)^{i+j}\lambda(\mb{r}(\tau) + (-1)^i \frac{x_0 +  \delta_x}{2}\hat{\mb{n}}_x + (-1)^j \frac{y_0 + \delta_y}{2} \hat{\mb{n}}_y) \\
% \equiv \frac{\partial^2}{\partial x_0 \partial y_0}\rho(\mb{r}(\tau)) \; \forall \; \tau
% \end{align*}

% Given Assumption \ref{Ass:RayRedundant}, for every $(x_0, y_0)$, there is only one set of $(\delta_x, \delta_y)$. Furthermore, the vertices of a cross-section of a ray will almost surely not intersect vertices of any other cross-section on any ray. Since there are many more degrees of freedom to fitting $\lambda$ to $\rho$, there exists an intensity field that describes the density exactly. 

% \end{proof}

\begin{proposition}[Rendering of PPP]
\label{Prop:ppp_rendering}
Consider a PPP $\lambda(\mb{x})$ and radiance $\mathbf{c}(\mb{x},\mathbf{d})$ and let the radiance satisfy (\ref{Ass1:Color}) from Assumption~\ref{Ass:Smooth}. Then, the expected color of a pixel rendered from the PPP matches the form of the rendering equation (\ref{Eq:RayTracing}).
\end{proposition}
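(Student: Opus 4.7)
My approach is to directly compute the expected pixel color by (i) finding the distribution of the depth $T$ at which the ray first encounters a dimensionless PPP particle in the frustum, and then (ii) taking the expectation of the color marking at that random depth. The setup of Section~\ref{ppp-proof} already reduces the problem to a single ``first arrival'' question, since a single dimensionless particle suffices to occlude the ray.

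First, I would observe that the event $\{T>t\}$ coincides with $\{N_\text{dim}(\mathcal{F}(t))=0\}$, so the void-probability formula (\ref{eq:void}) applied to the dimensionless-particle intensity $\tilde{\lambda}$ yields the survival function
\begin{equation*}
P(T > t) = \exp\!\left(-\int_{\mathcal{F}(t)}\tilde{\lambda}(\mb{x})\, d\mb{x}\right).
\end{equation*}
Because the dimensionless reweighting (\ref{eq:PPP_weighting}) is constructed so that one particle per infinitesimal slice suffices to terminate the ray, the frustum integral collapses to a line integral $\int_{t_n}^{t}\rho^\star(s)\, ds$ for a suitable scalar intensity $\rho^\star(t)$ along the ray (concretely, $\rho^\star(t)$ is the cross-sectional integral of $\tilde\lambda$ at depth $t$). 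Differentiating the survival function then gives
\begin{equation*}
f_T(t) = \rho^\star(t)\,\exp\!\left(-\int_{t_n}^{t}\rho^\star(s)\, ds\right).
\end{equation*}

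Second, I would apply the law of the unconscious statistician to write $\mathbb{E}[\mb{C}] = \int_{t_n}^{t_f}\mathbb{E}[\mathbf{c}(\mb{x}_1,\mathbf{d})\mid T=t]\, f_T(t)\, dt$, where $\mb{x}_1$ is the random location of the terminating particle. Conditional on $T=t$, $\mb{x}_1$ lies in the frustum cross-section at depth $t$, which is contained in a ball $B_\epsilon$ around $\mathbf{r}(t)$, so the color-smoothness hypothesis (\ref{Ass1:Color}) collapses the conditional expectation to $\mathbf{c}(\mathbf{r}(t),\mathbf{d})$. Substituting gives
\begin{equation*}
\mathbb{E}[\mb{C}] = \int_{t_n}^{t_f}\rho^\star(t)\,\exp\!\left(-\int_{t_n}^{t}\rho^\star(s)\, ds\right)\mathbf{c}(\mathbf{r}(t),\mathbf{d})\, dt,
\end{equation*}
which matches (\ref{Eq:RayTracing}) in form, with $\rho^\star$ playing the role of $\rho$.

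The hard part will be cleanly justifying the 3D-to-1D reduction, i.e.\ showing that the expected color truly collapses to a single-variable integral along the ray. This requires a careful Leibniz-rule differentiation of $\Lambda(t):=\int_{\mathcal{F}(t)}\tilde{\lambda}\, d\mb{x}$, whose domain itself depends on $t$, together with the dimensionless reweighting (\ref{eq:PPP_weighting}) so that the cross-sectional intensity integrates to a well-defined scalar $\rho^\star(t)$ independent of $A(t)$. A minor loose end I would address is the residual probability $\exp(-\Lambda(t_f))$ that the ray reaches the far plane encountering no particle; this contributes a background-color term that does not affect the structural form claimed by the proposition.
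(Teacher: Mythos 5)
Your proposal is correct and follows essentially the same route as the paper's proof: the paper likewise obtains the no-occlusion probability of the frustum $\mathcal{F}(t)$ from the void probability (built up by slicing into disjoint sections and using PPP independence, which is the discretized version of your Leibniz-rule differentiation of the depth-varying frustum integral), differentiates the resulting CDF of the first-occlusion depth to get the density $\kappa(t)\exp[-\int_{t_n}^{t}\kappa(\tau)\,d\tau]$ with $\kappa$ playing the role of your $\rho^\star$, and then invokes the color-smoothness assumption (\ref{Ass1:Color}) to evaluate the expected color as an integral along the ray matching (\ref{Eq:RayTracing}). Your noted loose ends (the depth-dependent dimensionless reweighting and the residual mass $\exp(-\Lambda(t_f))$ for rays reaching the far plane) are handled, respectively implicitly accepted, in the same way by the paper.
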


\begin{proof}
Let us consider a ray $\mathbf{r}(t) = \mathbf{o} + t \cdot \mathbf{d},$ where $t \in [t_n, t_f]$. We consider again the corresponding frustum $\mathcal{F}(t)$ \eqref{eq:beam_def}, the pyramid created by sweeping the scaled pixel area along $\mb{r}$ from $t_n$ to $t$.  

% Crucially, we need to define occlusion for each cross-sectional slice of the frustum orthogonal to the optical axis. To be exact, the binary event of occlusion occurs when at least 1 particle the size of some fraction $\gamma$ of the cross-sectional slice is present in that slice. We call $\gamma$ the \emph{occlusion threshold} (the $\%$ of the slice that needs to be occupied to be considered occluded). Thus, for slices farther along the ray, we consider larger particles since its size is proportional to the slice's area. 

As discussed in our rendering model, we consider ``dimensionless'' particles whose projected area is $\gamma A(t)$ (so the ray is occluded by the first particle encountered). Thus, the intensity/expected number of particles of this slice $\delta \mathcal{F}(t)$ for those of the occluding size ($V_d = \gamma V_{\delta \mathcal{F}}(t)$) is defined as

% \pc{So is the event that a particle big enough to occlude is present in the slice? Or if the total frontal area of all particles is enough to occlude? If it's the first, it seems like you could never get the ray to terminate after the frustum became too large, since the particle size is fixed (?). Tim: Particle size is not fixed, that's where the scaling factor $V_{\delta B}$ comes from. }

\begin{align*}
    &\Lambda\big(\delta \mathcal{F}(t)\big) = \int_{\mb{x} \in \delta \mathcal{F}(t)} \frac{V_{ref} \lambda(\mb{x})}{\gamma V_{\delta \mathcal{F}}(t)}\;d\mb{x} \\
    &= \int_{t-\delta t/2}^{t + \delta t/2}\int_{(x, y) \in A(\tau)} \frac{V_{ref} \lambda(\mb{r}(\tau) + x\hat{\mb{n}}_x + y\hat{\mb{n}}_y)}{\gamma V_{\delta \mathcal{F}}(t)} \, dx\, dy\, d\tau\\
    &= \frac{\delta t \int_{A(t)} A_{ref} \delta t \lambda(\mb{r}(t) + x\hat{\mb{n}}_x + y\hat{\mb{n}}_y) \, dx\, dy\,}{\gamma A(t) \delta t}
\end{align*}
% \pc{nit: needs reformatting}
if the reference particle is some small ball of size $V_{ref} = A_{ref} \delta t$.

Hence, the void probability of the slice (\ref{eq:void}) (equivalently, the probability of no occlusion) is \begin{align*}
&Pr(N(\delta \mathcal{F}(t)) = 0) = \\&\exp\left[- \frac{\int_{A(t)} A_{ref} \lambda(\mb{r}(t) + x\hat{\mb{n}}_x + y\hat{\mb{n}}_y) \, dx\, dy\,}{\gamma A(t)} \delta t\right].
\end{align*}

Now we consider the event where any slice of the frustum is occluded, up to a given depth $t$. To do this, we divide the frustum into smaller subsections along its length. Because the number of particles in disjoint subsets are independent by definition of the PPP (Def. \ref{def:ippp}(ii)), then the probability of occlusion of each section is independent of that of other sections. Hence, the probability that the frustum up to $t$ is not occluded requires all sections to not be occluded,\begin{align*}
&\prod_t Pr(N(\delta \mathcal{F}(t)) = 0) \\& =\prod_t \exp\left[- \frac{\int_{A(t)} A_{ref} \lambda(\mb{r}(t) + x\hat{\mb{n}}_x + y\hat{\mb{n}}_y) \, dx\, dy\,}{\gamma A(t)} \delta t\right]\\
&=\exp\left[-\sum_t\frac{\int_{A(t)} A_{ref} \lambda(\mb{r}(t) + x\hat{\mb{n}}_x + y\hat{\mb{n}}_y) \, dx\, dy\, }{\gamma A(t)} \delta t\right].
\end{align*}

In the limit as the section widths $\delta t$ approach zero, they become slices, and the summation becomes an integral. Thus, the probability that the frustum up to $t$ is occluded is \begin{align*}
&Pr(\mathcal{F}(t)\,  \text{not occluded})\\
&=\exp\left[-\int_t\frac{\int_{A(\tau)} A_{ref} \lambda(\mb{r}(\tau) + x\hat{\mb{n}}_x + y\hat{\mb{n}}_y) \, dx\, dy\,}{\gamma A(\tau)} d\tau\right] \, .
\end{align*}

For notational simplicity, we henceforth denote the surface integral divided by the scaled area of the slice by $\kappa(\tau)$. 

Let us now consider a random variable $T_\text{min}$ which defines the distance of the first occluding slice (denoted green in Fig. \ref{fig:rendering_explainer}). We can define the cumulative distribution function of this variable, for some $t > t_n$, $T_\text{min} \leq t$ as the probability that $\mathcal{F}(t)$ is occluded.  %$N\big(\mathcal{B}(t)\big) > 0$. %From the definition of the PPP, we can compute the void probability \
% \begin{align}
% P\left(N\big(B(t)\big) = 0\right) &= \frac{\Lambda\left(B(t)\right)^0}{0!} \exp\left[-\Lambda\big(B(t)\big)\right] \\
% &= \exp\left[-\Lambda\big(B(t)\big)\right].
% \end{align}
Thus, the CDF of $T_\text{min}$ can be defined using the above equation,
\begin{align*}
Pr(T_\text{min}\leq t) \equiv F_{T_\text{min}}(t) &= 1 - P\left(\mathcal{F}(t)\,  \text{not occluded}\right)\\
 % &= 1 - \exp\left[-\Lambda\big(\mathcal{B}(t)\big)\right],\\
 & = 1 - \exp\left[-\int_{t_n}^t  \kappa(\tau) \, d\tau \right].
\end{align*}

We can then compute the PDF of $T_\text{min}$ by differentiating $F_{T_\text{min}}$ with respect to $t$, yielding,
\begin{equation}
\begin{split}
     f_{T_\text{min}}(t) &= \frac{d}{dt} F_{T_\text{min}}(t) \\
    &=\kappa(t) \exp\left[-\int_{t_n}^t  \kappa(\tau) \, d\tau\right]. 
    \end{split}
\label{eq:depth_likelihood}
\end{equation}
This defines a probability distribution over the extent of the unoccluded region.

Finally,  due to Assumption~\ref{Ass:Smooth}, the color of the slice returned by the PPP rendering is equal to the marking function evaluated at $\mb{r}(T_\text{min}).$ Thus, we can compute the expected color of the PPP rendering by computing the expectation of $\mathbf{c}(\mathbf{r}(T_\text{min}))$, yielding
\begin{align*}
    \mb{C}(\mathbf{r}) &= \mathbb{E}\left[ \mathbf{c}(\mathbf{r}(T_\text{min})) \right],\\
    &= \int_{t_n}^{t_f} \kappa(t) \mathbf{c}(\mathbf{r}(t)) \exp\left[-\int_{t_n}^t  \kappa(\tau) \, d\tau \right] \; dt.
\end{align*}
The PPP expected color matches exactly the expression given in (\ref{Eq:RayTracing}) from the original NeRF paper \cite{mildenhall2020a}, completing the proof. 
\end{proof}

\begin{proposition}[NeRF-PPP Equivalence]
\label{prop:nerfppp-equal}
Consider a NeRF with density $\rho(\mb{x})$ and let Assumption \ref{Ass:RayRedundant} hold. Then the NeRF is a locally area-averaged PPP. 

% Let $C(\mb{o}, \mb{d})$ be the color of a pixel of width $w_0$ and height $h_0$ rendered from the NeRF \eqref{Eq:RayTracing}. Then, the pixel color $C(\mb{o}, \mb{d})$ of the NeRF is the expected color of a pixel rendered from a PPP with intensity $\lambda(\mb{p}) = f(\rho(\mb{p}), \mb{p})$ and marking $\mb{c}(\mb{p}, \mb{d}).$
\end{proposition}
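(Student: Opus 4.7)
The strategy is to bootstrap from Proposition~\ref{Prop:ppp_rendering}, which has already rewritten the expected color of a marked PPP along a pixel-centered frustum in exactly the integral form of the NeRF rendering equation (\ref{Eq:RayTracing}), with the cross-section area-averaged intensity
\[ \kappa(t) \;=\; \frac{A_{ref}}{\gamma\, A(t)} \int_{A(t)} \lambda\bigl(\mathbf{r}(t) + x\hat{\mathbf{n}}_x + y\hat{\mathbf{n}}_y\bigr)\, dx\, dy \]
playing the role of $\rho(\mathbf{r}(t))$. So the ``area-averaged'' content is already baked into Prop.~\ref{Prop:ppp_rendering}; what remains is to show that the identification $\rho(\mathbf{r}(t)) = \kappa(t)$ can be made consistently over all of space, not merely along a single isolated ray.

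The plan has three short steps. First, I would observe that for any single training ray $\mathbf{r}(t)$, setting $\rho(\mathbf{r}(t)) \equiv \kappa(t)$ along that ray makes the NeRF rendering integral coincide integrand-by-integrand with the PPP expected-color integral of Prop.~\ref{Prop:ppp_rendering}; so this assignment exactly reproduces the training pixel color under the photometric loss (\ref{Eq:PhotometricLoss}). Second, I would invoke Assumption~\ref{Ass:RayRedundant}: since no two training rays share a point in space, the per-ray assignments never conflict at any 3D location, and they glue into a well-defined field $\rho : \R^3 \to \R_{\ge 0}$. Third, unpacking $\kappa$ shows that this resulting $\rho$ is, by construction and up to the $V_\text{ref}/(\gamma V_{\delta \mathcal{F}(t)})$ reweighting used in Section~\ref{ppp-proof}, the area-average of the latent PPP intensity $\lambda$ over the pixel frustum cross-section at $\mathbf{r}(t)$ — exactly the ``locally area-averaged PPP'' statement.

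The main obstacle I anticipate is interpreting ``locally area-averaged'' cleanly. The averaging cross-section at a point depends on the training ray direction passing through it, so in principle two rays hitting the same point from different directions could demand different averaged values for $\rho$. Assumption~\ref{Ass:RayRedundant} sidesteps this by declaring that no two rays ever share a point, so the glued field is unambiguous; but the result really says the NeRF encodes a ray-conditioned area-average of $\lambda$, not a purely isotropic local average, and I would flag that caveat in the write-up. It is also worth noting that the argument gives existence of a consistent PPP interpretation rather than uniqueness: the rendering integral along one ray does not pin $\rho$ down pointwise from the pixel color alone, and we are implicitly relying on the smoothness of the NeRF's neural parameterization to select the area-averaged solution as the intended one.
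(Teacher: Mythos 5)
Your proposal matches the paper's argument essentially step for step: you identify $\rho(\mb{r}(t))$ with the cross-section area-averaged intensity $\kappa(t)$ obtained in Proposition~\ref{Prop:ppp_rendering} by equating the two rendering integrals along every training ray, and you invoke Assumption~\ref{Ass:RayRedundant} for exactly the same reason the paper does --- to rule out conflicting assignments where two rays would share a point but have different averaging cross-sections. Your added caveats (the average is ray-conditioned rather than isotropic, and the argument gives a consistent interpretation rather than a unique one) are sensible glosses consistent with the paper's own remarks, not deviations from its proof.
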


\begin{proof}
Following from the above proof, we make equivalences between the two rendering equations for all training rays, namely 

\begin{align*}
\forall \; \mb{r}(t; \mb{o}, \mb{d}):& \, \\
\rho(\mb{r}(t)) &= \frac{\int_{A(t)} A_{ref} \lambda(\mb{r}(t) + x\hat{\mb{n}}_x + y\hat{\mb{n}}_y) \, dx\, dy\,}{\gamma A(t)}.
\end{align*}

Note that we require Assumption \ref{Ass:RayRedundant} because when two rays intersect, the left hand side of the above equation is necessarily identical for both rays, yet the right hand side may not be. Specifically, the integration domains for the two rays at the intersection point may not be identical, hence the numerator and denominator on the right hand side are not the same for both rays. Moreover, note that for $\gamma = 1$ (full occlusion), the density is the area-averaged number of particles over the slice $A(t)$.
\end{proof}

Crucially, this matches the intuition on the NeRF density proposed by \cite{max-optical, mildenhall2020a}. However, our derivation is more general than that of \cite{max-optical}, which assumes a constant-area frustum. Although we used a pyramidal frustum for illustration purposes, note that our derivation does not assume the form of $A(t)$ (e.g. rectangular, circular) and therefore the shape of the frustum, so long as the frustum can be decomposed into disjoint slices that are themselves connected sets. Finally, our derivation suggests a more general rendering equation since we had to assume local homogeneity of color to retrieve (\ref{Eq:RayTracing}). Without this limitation, we could derive a more expressive and accurate rendering equation. Moreover, our derivation even proposes a parameter $\gamma$ that can be tuned to more precisely define occlusion and perhaps increase the fidelity of the render. 

\begin{corollary}
\label{corr:lambda2rho}
By (\ref{Ass1:Density}) from Assumption \ref{Ass:Smooth}, $\rho(x) = A_{ref} \frac{\lambda(x)}{\gamma}$, where $0 < \gamma \leq 1$, so the NeRF density is related to an equivalent PPP through a constant scale factor $A_{ref}/\gamma$. 
\end{corollary}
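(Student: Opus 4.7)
The plan is to start from the pointwise equality established in the proof of Proposition~\ref{prop:nerfppp-equal}, namely
$$\rho(\mb{r}(t)) = \frac{\int_{A(t)} A_{ref}\, \lambda(\mb{r}(t) + x\hat{\mb{n}}_x + y\hat{\mb{n}}_y)\, dx\, dy}{\gamma A(t)},$$
and then to invoke the smoothness condition (\ref{Ass1:Density}) to pull $\lambda$ out of the cross-sectional integral as a constant. Since the resulting scale factor $A_{ref}/\gamma$ does not depend on $t$ or on the spatial variable, this immediately produces the claimed proportionality.

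First I would observe that the cross-section $A(t)$ of the pixel frustum is, by construction, contained inside a ball $B_\epsilon$ whose radius is at least as large as the one specified in Assumption~\ref{Ass:Smooth}: indeed, $\epsilon$ was chosen to be the minimum radius of a ball containing a pixel projected onto the far plane, so $A(t) \subset B_\epsilon$ for all $t \in [t_n, t_f]$. Next I would apply (\ref{Ass1:Density}) at the center $\mb{x}_\epsilon = \mb{r}(t)$, which tells us that
$$\int_{B_\epsilon} \lambda(\mb{x})\, d\mb{x} = \lambda(\mb{r}(t))\, V_{B_\epsilon}.$$
Applied with concentric sub-balls of all radii up to $\epsilon$, this forces $\lambda$ to be (essentially) constant on $B_\epsilon$, and hence constant on the planar cross-section $A(t) \subset B_\epsilon$. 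Consequently
$$\int_{A(t)} \lambda(\mb{r}(t) + x\hat{\mb{n}}_x + y\hat{\mb{n}}_y)\, dx\, dy = \lambda(\mb{r}(t))\, A(t).$$
Substituting this back cancels $A(t)$ between numerator and denominator and produces $\rho(\mb{x}) = A_{ref}\, \lambda(\mb{x})/\gamma$ with $0 < \gamma \leq 1$, as desired.

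The main obstacle, and the step that deserves the most care, is precisely the passage from the 3D ball statement in Assumption~\ref{Ass:Smooth} to a 2D planar statement on $A(t)$. Stated as a single equality on one ball, (\ref{Ass1:Density}) in principle permits $\lambda$ to redistribute mass within $B_\epsilon$; a clean reading of the corollary relies on the informal intent of the assumption (local area-averaging at the scale $\epsilon$) rather than its literal one-ball form. I would therefore either strengthen the invocation so that (\ref{Ass1:Density}) is applied to every sub-ball of $B_\epsilon$, pinning $\lambda$ to a constant there, or equivalently note that the corollary is really a statement about the locally area-averaged intensity at the scale $\epsilon$, which is exactly what the right-hand side of the NeRF-PPP equivalence in Proposition~\ref{prop:nerfppp-equal} computes. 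Either reading yields the constant scale factor $A_{ref}/\gamma$ asserted by the corollary, and makes transparent the interpretation that the NeRF density is the PPP intensity, rescaled once and for all by a factor that depends only on the reference particle cross-section and the occlusion fraction.
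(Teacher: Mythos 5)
Your proposal follows essentially the same route as the paper, which states the corollary as an immediate consequence of the equivalence $\rho(\mb{r}(t)) = \int_{A(t)} A_{ref}\,\lambda\,dx\,dy / (\gamma A(t))$ from Proposition~\ref{prop:nerfppp-equal} combined with the local-averaging condition (\ref{Ass1:Density}), which lets the cross-sectional average of $\lambda$ be replaced by its center value so that $A(t)$ cancels. Your extra remark about the gap between the literal one-ball, 3D volume-average form of (\ref{Ass1:Density}) and the 2D planar average actually needed is a fair observation about the assumption's informal intent, but it does not change the argument, which matches the paper's.
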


In general, the constant $A_{ref}/\gamma$ is unknown, however we show in Sec.~\ref{Sec:CollisionProbability} below that we do not need its exact value to compute the collision probability for a robot body.

Having shown that a PPP can be derived from the NeRF, we visually show this relationship in Fig.~\ref{fig:ppp-st-overlay}, where we generate a point cloud randomly drawn from the PPP (blue spheres) superimposed on the NeRF rendering of the same scene.  The correspondence in geometry of the NeRF scene and the PPP point cloud is clear. 

\begin{figure}[]
     \centering
        \includegraphics[width=0.49\textwidth]{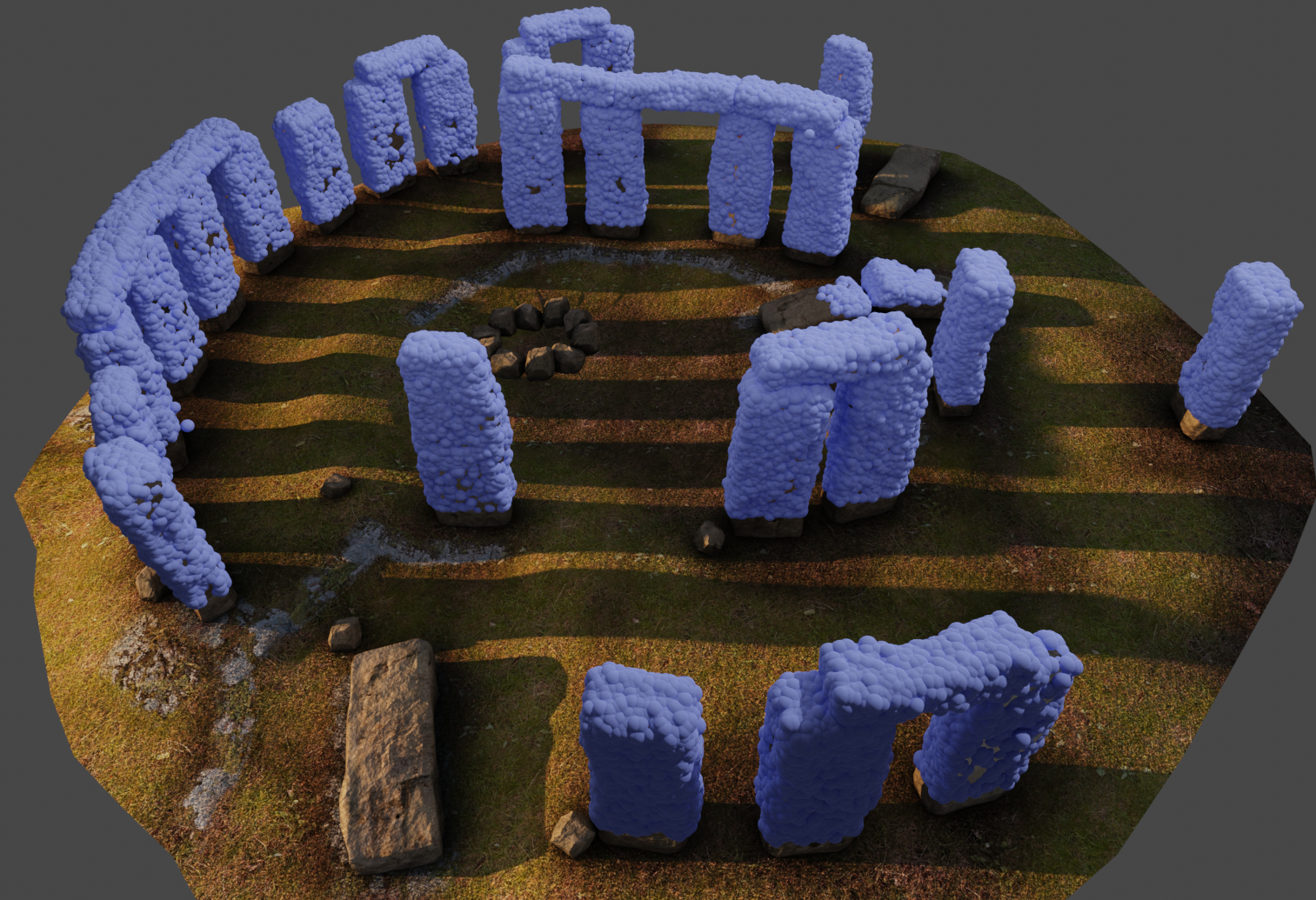}
        \caption{Overlay of a realization of the PPP with the ground-truth mesh of Stonehenge. The two have strong spatial agreement.}
        \label{fig:ppp-st-overlay}
\end{figure}
Note that while area-averaging of $\lambda$ to $\rho$ yielded the rendered color as a line integral as opposed to a volume integral, we have lost information about the $\lambda$ field (i.e. the reference particle $A_{ref}$ and occlusion $\gamma$) when using a learning framework to learn $\rho$. In fact, we conjecture the observed aliasing phenomena in which NeRFs fail at different scales \cite{mipnerf} is due to this averaging scheme. The success of works like Mip-NeRF \cite{mipnerf} that reason about the pixel not as a projection of a ray, but along a frustrum and evaluating rendering as a volume integral (i.e., learning $\lambda$ rather than $\rho$) gives us reason to believe that learning the parameters of the PPP directly could yield higher quality geometry.

\subsection{Discussion of PPP Interpretation}

An important advantage of taking a PPP interpretation of the NeRF density is that it allows us to leverage well-studied properties of PPPs when reasoning about NeRFs. Specifically, uncertainty metrics such as likelihood and entropy are well-defined for PPPs. Suppose we measure a point cloud of our environment using an onboard lidar or depth camera; i.e., for a set of rays $\{\mb{r}_1, \ldots, \mb{r}_k\}$ we obtain noiseless depth measurements $\{d_1, \ldots d_k\}$. We can write the likelihood of obtaining these depths under our NeRF using \eqref{eq:depth_likelihood},

\begin{equation} \label{ppp-ll}
\log P(d_1, \ldots d_k) = \sum_{i=0}^k \left[\log \rho(\mb{r}(d_i)) - \int_{t_n}^{d_i} \rho(\mb{r}(t))dt\right].
% \begin{split}
% \log(L) = \sum_{i=0}^{N}\log(\lambda(X_i)) - \int_B \lambda(x)\; dx
% \end{split}
\end{equation}

If, say, the robot's state is uncertain, and the depth measurements are corrupted by noise, this likelihood can be used in the computation of Bayes' rule for pose estimation. Since previous literature on NeRFs contained no such likelihood interpretation, existing approaches to state estimation  \cite{yen2021inerf, adamkiewicz2022vision} instead only minimize a photometric loss as a proxy for maximum likelihood estimation.

Further, notions of entropy and mutual information can be generalized to point processes, as discussed in \cite{ppp-entropy}. In particular, the entropy of a point process over a set $B$ is defined as
\begin{align}
\mathbb{H}(B) &\equiv \int_B \lambda(\mb{x}) (1 - \log \lambda(\mb{x}))d \mb{x}\\
&= \frac{\gamma}{A_{ref}} \int_B \rho(\mb{x}) (1 + \log A_{ref} - \log \gamma - \log \rho(\mb{x})) d\mb{x} \label{ppp-entropy}.
\end{align}
Thus, $\mathbb{H}$ can be used as a measure of the uncertainty of a NeRF over some set $B$, which would be useful to reason about which parts of the NeRF are poorly-supervised (i.e., have high entropy) for problems such as next-best-view selection and active perception. 

Finally, the PPP interpretation of the NeRF allows us to provide a novel perspective on NeRF training: minimizing the photometric loss \eqref{Eq:PhotometricLoss} proposed in \cite{mildenhall2020a} can be interpreted as performing moment-matching \cite[Ch.~4]{vaart_1998} on the first moment of the color distribution along the supervised rays. In particular, the photometric loss will be zero if the color rendered from the NeRF (i.e., an expected color along the ray of a PPP) matches the sample distribution (i.e., the color label in the dataset). We believe our probabilistic interpretation of the NeRF density could also inspire other loss functions beyond \eqref{Eq:PhotometricLoss}, to perform other methods of parameter estimation such as maximum likelihood estimation, expectation maximization, and so on.

\section{Computing Collision Probability with NeRF Scenes}
\label{Sec:CollisionProbability}

% \subsection{Collision Probability with a Robot Body}\label{coll_prob}

To leverage the probabilistic interpretation of NeRFs to evaluate the probability of collision between a robot body and the NeRF, we first define $B(\mathbf{p},\mb{R}) \subset \mathbb{R}^n$ as the robot body parameterized by its pose $(\mathbf{p}, \mb{R})$ (the set of points occupied by the robot with position $\textbf{p}\in\mathbb{R}^3$ and orientation $\mb{R}\in \mathbb{SO}(3)$), and consider an environment represented as a NeRF with density field $\rho(\mathbf{p})$, which we have shown to be related to the PPP field through a constant scale factor $\lambda(\mathbf{p}) = \frac{\gamma \rho(\mathbf{p})}{A_{ref}}$.

We define collision probability, or the probability that a collection of points from a PPP intersects with the robot body, as the probability that at most some volume $V_{max}$ from the NeRF can exist within the robot volume. We call $V_{max}$ the specified or allowable inter-pentration volume. Given some auxiliary particle (which may not be the same as the reference particle) that is user-defined and has some volume $V_{aux} < V_{max}$, we can solve for the maximum number of auxiliary particles that should exist in the robot volume $N_{aux}^{max} = \frac{V_{max}}{V_{aux}}$. The definition of this new type of particle is necessary because one does not have access to the reference particle dimensions of the underlying PPP. We show that this knowledge is not necessary to compute collision. Nonetheless, we are simply solving for the Cumulative Distribution Function (CDF) of the Poisson Point Process associated with the auxiliary particle (i.e. the number of particles that exist in the robot body), 

\begin{equation}\label{chance_constraint}
    Pr(X \leq N_{aux}^{max}; \Lambda_B) = \exp^{-\Lambda_B}\sum_{i=0}^{\lfloor {N_{aux}^{max}}  \rfloor}\frac{\Lambda_B^i}{i!},
\end{equation}
where $\Lambda_B$ is the intensity for the auxiliary particle over the robot body. 

Note that $\Lambda_B$ is the PPP associated with the auxiliary particle and not the reference; therefore, we must scale the NeRF density appropriately, 
\begin{equation}
    \Lambda_B = \int_B \lambda_{aux}(\mb{x})\; d\mb{x} = \frac{V_{ref}}{V_{aux}} \int_B \lambda(\mb{x})\; d\mb{x}.
\end{equation}

Recall that $V_{ref} = A_{ref} \delta t$ and assume an identical form for the auxiliary volume $V_{aux} = A_{aux} \delta t$. Additionally, we can substitute the relationship between $\lambda$ and $\rho$ (Cor. \ref{corr:lambda2rho}) to get the following
\begin{equation}
\label{eq:robot_intensity}
    \Lambda_B = \frac{A_{ref} \delta t}{A_{aux} \delta t} \int_B \frac{\gamma \rho(\mb{x})}{A_{ref}} \; d\mb{x} = \frac{\gamma}{A_{aux}} \int_B \rho(\mb{x})\; d\mb{x}.
\end{equation}
This brings us to the first formal definition of NeRF collision.

\begin{definition}[Probabilisitically Safe]
A robot body parametrized by its pose $B(\mathbf{p}, \mb{R})$ is probabilistically safe if the collision probability given in (\ref{chance_constraint}) and (\ref{eq:robot_intensity})
 satisfies $Pr(N(B(\mathbf{p}, \mb{R})) \leq N_{aux}^{max}; \Lambda_B) \geq \sigma$, for desired probability threshold $\sigma$, auxiliary particle associated with the specified inter-penetration volume, and occlusion threshold.
\end{definition}

More succinctly, we consider a robot as probabilistically safe if the interpenetration volume between the robot and the NeRF is less that a threshold $V_{max}$ with probability at least $\sigma$. A major remaining question is how to choose the auxiliary particle size and occlusion threshold $\gamma$. Since we are assuming smoothness on the length scales of a pixel, it is appropriate to use an auxiliary particle of this size. We use the approximate size of a pixel at the near plane and the sampling distance along a ray to find the dimensions of the auxiliary particle. Specifically, given a camera with focal length 50 mm, a scene of length 2 m, a FOV of $90^\circ$, and a 1000 by 1000 image, the pixel side on the image plane is $10^{-4}\; \text{m}$, hence for a square pixel, $A_{aux} = 10^{-8} \; \text{m}^2$. If the pixel size is the 2D resolution of the image, we can think of $\delta t$ as the sampling resolution used to learn the NeRF, hence $V_{aux}$ the 3D resolution of the NeRF learned from data. Typically, there are anywhere between 100 to 200 sample points along a ray between the near and far plane, so we choose $\delta t = 20 \; \text{mm}$, yielding $V_{aux} = 2 \cdot 10^{-8} \; \text{m}^3$. In fact, the CDF is relatively insensitive to $A_{aux}$, as both $N_{aux}^{max}$ and $\Lambda_B$ are scaled the same amount for changes in $A_{aux}$. $\gamma$ is essentially a modelling parameter as there is no way to know what the environment defines as an occlusion event. However, in the interest of safety and interpretability, we set $\gamma = 1$ so that it is meaningful (i.e. full occlusion) and such that it yields the most conservative estimate for $\lambda$.

\section{Chance-Constrained Trajectory Generation in NeRFs}
\label{Sec:CollisionAvoidance}

We now consider the problem of real-time trajectory planning for a robot in an environment represented as a NeRF, subject to constraints on the probability of collision \eqref{chance_constraint}. Our proposed algorithm, CATNIPS, has two parts: we first generate a lightweight, voxel-based scene representation, which we term a  Probabilistically Unsafe Robot Region (PURR), that encodes the robot locations that satisfy the collision constraint for a particular robot geometry. We then use Bézier curves to plan safe trajectories in position space for a robot traversing the PURR subject to the probabilistic collision constraint. By assuming differential flatness of our robot, we can guarantee dynamic feasibility of our solution when planning in the flat output space.

We note that the following collision calculations may be conservative due to the approximation of the robot as a sphere such that safety is invariant to orientation. However, the collision metric (\ref{chance_constraint}) could be calculated while taking into account the position, orientation, and the physical geometry of the robot. We also note that the differentiability of the collision probability permits its use in gradient-based trajectory optimizers. Nonetheless, our design choices were motivated by speed, scalability, and modularity with other downstream tasks (e.g. active view planning where the orientation can be freely manipulated).

\subsection{Generating the PURR}
The PURR is a binary, voxelized representation of the NeRF that indicates collision in $\mathbb{R}^3$. If the robot's position is located within a free voxel in the PURR, the robot is probabilistically safe, i.e., the chance collision constraint is guaranteed to be satisfied.  Otherwise, safety is not guaranteed---the robot may (or may not) be in violation of the chance constraint.   %The space of the scene is first gridded into voxels indexed as $v_{ijk}$. If a robot's position $\mathbf{p}$ is located in any of the occupied voxels of the PURR, then probabilistic safety over the entire robot body may be violated. Conversely, existence in any voxel not tagged is guaranteed safe. 

Similar to classical configuration space planning \cite{lozano1990spatial}, the core idea behind the PURR is to inflate the occupied space in the map such that planning a path through free space in the inflated map corresponds to a robot trajectory that satisfies the collision probability constraint in the underlying NeRF map.  We essentially ``inflate'' the NeRF density function to account for both the robot geometry and the chance constraint.  

Fig.~\ref{fig:purr_explainer} shows a schematic of the PURR generation process, as well as how the PURR fits into our trajectory planning pipeline. We first voxelize the space of the map and label each cell with the integral of the NeRF density multiplied with a scaling factor over each cell to give the voxelized \emph{cell} intensity grid, $\mathbb{I}_c$. We then take the Minkowski sum between a sphere bounding the robot and one underlying voxel cell to produce the set of all voxels that the robot could be touching, in any orientation, if its center of mass were located anywhere in one cell;  we call this the robot kernel, $\mathbb{K}$. Finally, we convolve the robot kernel $\mathbb{K}$ with the cell intensity grid $\mathbb{I}_c$ to obtain the \emph{robot} intensity grid, $\mathbb{I}_r$. Finally, we evaluate the Poisson CDF using the robot intensity (as well as the auxiliary particle parameters) and threshold the robot intensity grid with the user-defined collision probability threshold $\sigma$ to get the binary PURR map.  These operations are described in more mathematical detail as follows.

%the PPP occupancy use the PPP interpretation of the NeRF density to compute a voxelized map of the occupancy probability voxel compute through the  requires the creation of three objects: the \emph{cell} intensity grid, the robot bounding sphere, and the robot kernel (Fig. \ref{fig:purr_explainer}). The sphere is used to create the kernel, which is then convolved with the \emph{cell} intensity grid to create the \emph{robot} intensity grid. Each voxel value of the \emph{robot} intensity grid is an upper bound on the collision probability (\ref{chance_constraint}) for a robot body with position $\mathbf{p} \in v_{ijk}$ anywhere in voxel $v_{ijk}$. We threshold the grid by the desired probability cutoff $\sigma$, such that cells less than $\sigma$ are guaranteed to satisfy $Pr(N(B(\mathcal{T})) > 0) \leq \sigma$ (i.e. guaranteed to be probabilistically safe). Cells greater than $\sigma$ may potentially be unsafe and should be avoided. 

\begin{figure}[t]
         \centering
         \includegraphics[width=\columnwidth]{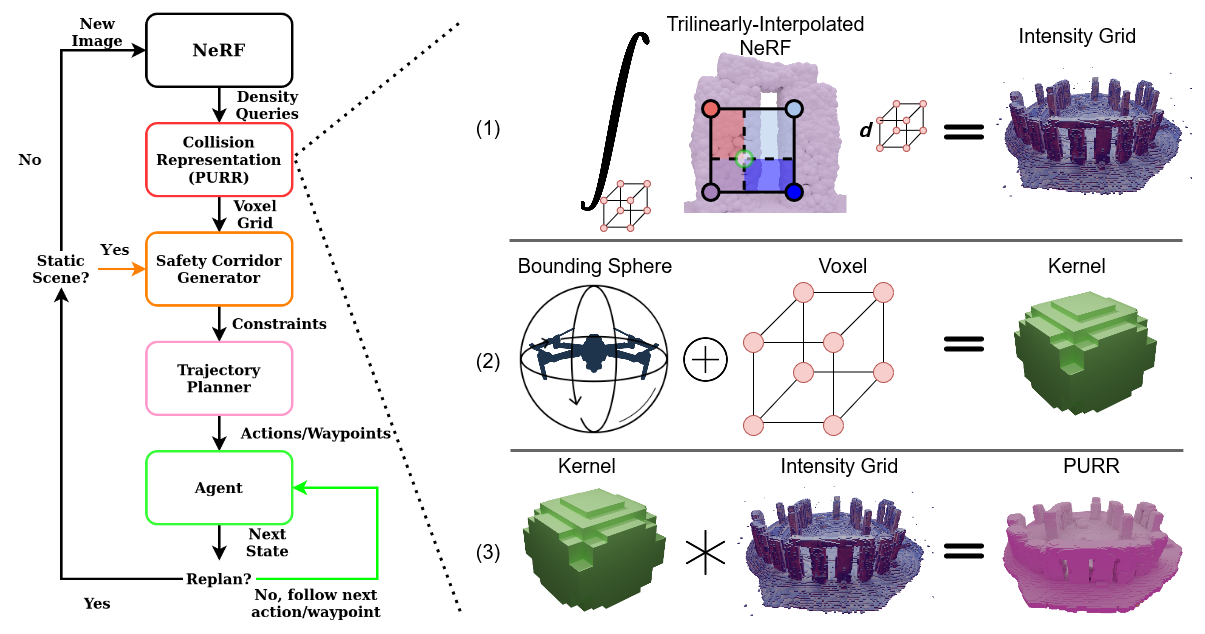}
         \caption{NeRF to PURR pipeline. (1) A density grid is sampled from the NeRF, which is then trilinearly interpolated and integrated over a particular voxel to retrieve the cell intensity grid. (2) A robot kernel is generated by taking the Minkowski sum between the minimal bounding sphere of the robot and a single voxel. (3) The kernel is used in a \texttt{Conv3D} operation with the cell intensity voxel grid to create the robot intensity grid, which we then threshold by the user-defined collision probability $\sigma$ to create the PURR.}
         \label{fig:purr_explainer}
\end{figure}

\subsubsection{Cell Intensity Grid}
The \emph{cell} intensity grid $\mathcal{I}_c$ computes, for each grid cell, the expected number of auxiliary particles in voxel $v_{ijk}$
\begin{equation}
\label{Eq:CellInt}
\mathbb{I}_c(v_{ijk}) = \int_{v_{ijk}} \frac{\gamma \rho(x)}{A_{aux}} dx.
\end{equation} This integral, in general, cannot be computed analytically since the density $\rho$ is typically represented using a neural network.  We compute a high-quality approximation of this integral using a trilinear interpolation scheme.  In fact, if the NeRF density uses an underlying voxel-based representation (as do the most high-speed and high quality NeRF variants in the literature \cite{mueller2022instant,plenoxels}) our integral of the trilinear interpolation is exact. 
 %Inspired by the speed and accuracy of voxel-based NeRFs \cite{plenoxels}, rather than computing the robot collision probability \eqref{chance_constraint} online using, say, Monte Carlo integration, we choose to ``bake'' the NeRF-based collision probabilities into a voxelized representation. 

We first discretize the environment spatially, using a rectangular grid and query the NeRF for the density values at the grid vertices. We then represent the continuous density field using trilinear interpolation as \cite{max-optical},
\begin{equation}\label{trilinear}
\begin{split}
\hat{\rho}(x, y, z) = &  c_1 + c_2 x + c_3 y + c_4 z + \\
& c_5 xy + c_6 yz + c_7 xz + c_8 xyz.
\end{split}
\end{equation}
The coefficients $\mb{c}_{1:8}$ are the solution of a linear system $\mathbf{A}_{1:8} \mb{c_{1:8}} = \mb{\rho_{1:8}}$, where $\mathbf{A}_{1:8}$ is the matrix of stacked row vectors of the terms involving cell vertex locations, and  $\mb{\rho_{1:8}}$ the densities at the vertices. Note that at the vertices of the cell, $\hat{\rho}(x, y, z) = \rho(x, y, z)$. Different interpolations exist for other finite element geometries, although we only consider rectilinear cells in this work. The cell values of the \emph{cell} intensity grid are computed from a closed form analytic solution to the  integral (\ref{Eq:CellInt}) plugging in (\ref{trilinear}) for $\rho$ over the extent of the cell. The analytic expression is given in Appendix \ref{app:trilinear_analytic}. 

\subsubsection{Robot Kernel}
The robot kernel $\mathbb{K}(v_{ijk})$ is a mask that indicates the neighborhood of cells around voxel $v_{ijk}$ that are considered in the computation of the collision probability when the robot position $\mathbf{p}$ is anywhere in $v_{ijk}$. We first find the Minkowski sum of the minimum bounding sphere\footnote{Inflating the robot body to a sphere removes the effects of robot orientation on safety. As a result, the PURR can be efficiently created in position space, while downstream tasks have the ability control the robot orientation without impacting safety.}
of the robot with the cell in which the robot center is located.  We then compute the smallest collection of voxels that contains this Minkowski sum.  This collection of voxels is the robot kernel, which can be efficiently convolved with the 3D voxelized grid using standard PyTorch functions.

\subsubsection{Robot Intensity Grid}
Once the robot kernel is defined, computing the collision probability for the robot in a particular cell simply requires a convolution between the kernel $\mathbb{K}$ and the cell intensity grid $\mathbb{I}_c$. In particular, we generate a \emph{robot} intensity grid $\mathbb{I}_r$, by convolving the kernel with each cell, giving the expected number of auxiliary particles in the body as follows,
\begin{equation}
\begin{split}
\label{robo_intensity_grid}
    \mathbb{I}_r (v_{ijk}) = & \sum_{v_{lmn} \in \mathbb{K}(v_{ijk})} \mathbb{I}_c (v_{lmn}) \\
     = &\sum_{v_{lmn} \in \mathbb{K}(v_{ijk})} \int_{v_{lmn}} \frac{\gamma \hat{\rho}(\mb{x})}{A_{aux}} \; d\mb{x}\\
     \geq & \int_{B(\mathbf{p}, \mb{R})} \frac{\gamma \hat{\rho}(\mb{x})}{A_{aux}}\;  d\mb{x} \; \forall \; \mb{R} \in \mathbb{SO}(3), \mathbf{p} \in v_{ijk}\\
     \approx & \int_{B(\mathbf{p}, \mb{R})} \frac{\gamma \rho(\mb{x})}{A_{aux}}\; d\mb{x} \; \forall \; \mb{R} \in \mathbb{SO}(3), \mathbf{p} \in v_{ijk}.
\end{split}
\end{equation}

Finally, the PURR $\mathbb{P}$ is generated by calculating the CDF (\ref{chance_constraint}) using the \emph{robot} intensity grid and thresholding by the collision probability threshold $\sigma$,
\begin{equation}
\label{purr_eqn}
    \mathbb{P}(v_{ijk}) = Pr(X \leq N_{aux}^{max}; \mathbb{I}_r(v_{ijk})) < \sigma.
\end{equation}
The resulting PURR is visualized at different specified inter-penetration volumes at a reasonable probability of $\sigma = 95\%$ in Fig.~\ref{fig:purr-viz} (Top) for the Flightroom NeRF environment.  On the bottom of Fig.~\ref{fig:purr-viz}, a simple density thresholded grid can yield visually similar voxel maps, but can degenerate arbitrarily quickly for large density cutoffs. This is because the threshold for the PURR is calibrated to a precise probability of collision, while thresholding on the density provides no interpretable safety metrics.

\begin{figure}[b]
         \centering         
         \includegraphics[width=\columnwidth]{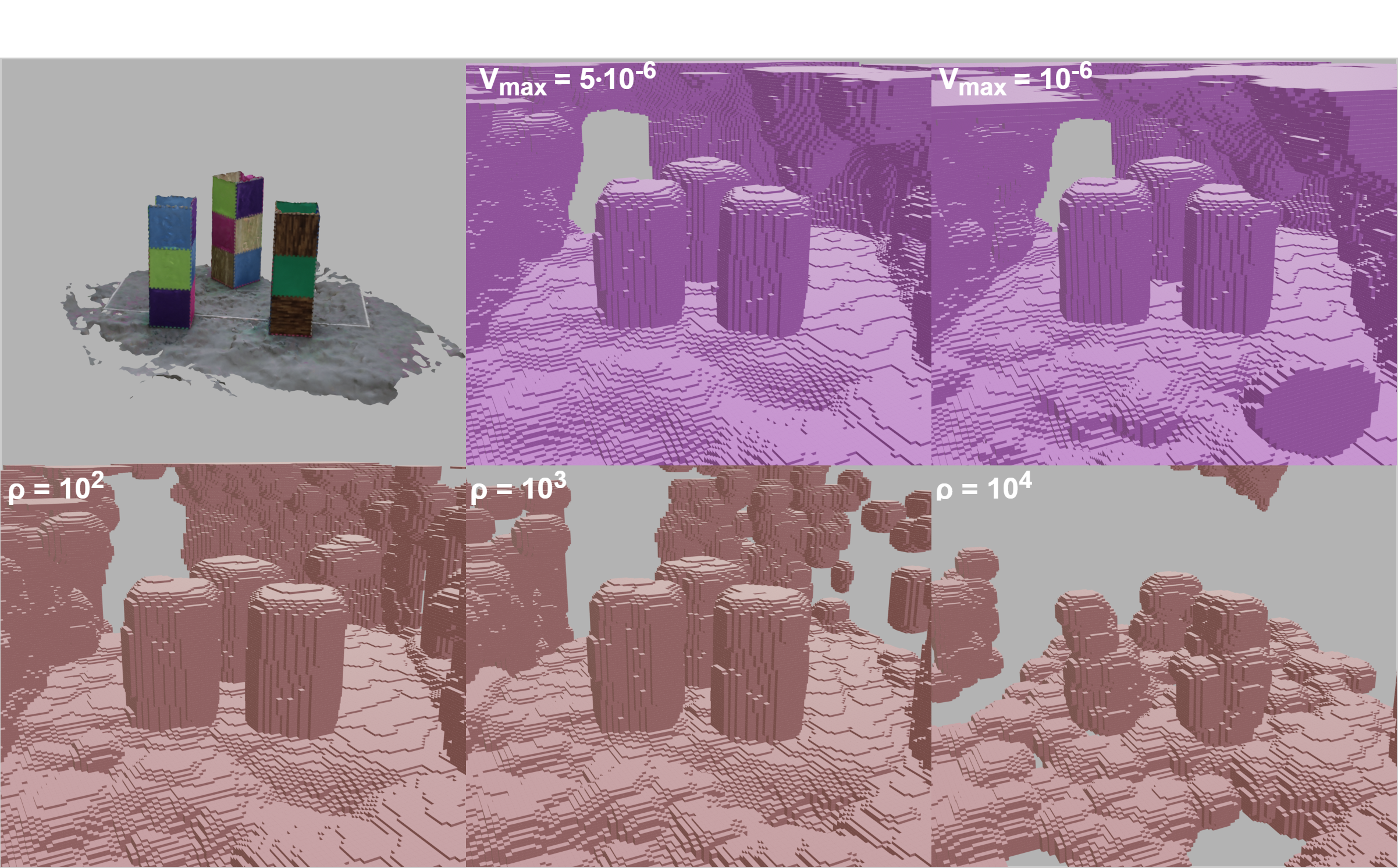}
         \caption{Top: Mesh of Flightroom, and PURR with varying specified inter-penetration values $V_{max}$ at fixed probability $\sigma = 95\%$. Bottom: Density-thresholded voxel maps with varying density values $\rho$. The PURR is precisely calibrated to give a desired probability of collision, while thresholding the NeRF density directly offers no particular safety guarantee. The density map can quickly degenerate based on the threshold, while the PURR can still capture the geometry for reasonable ranges of $V_{max}$.}
         \label{fig:purr-viz}
\end{figure}

Finally, we note that the trilinear interpolation of the density (\ref{trilinear}) to compute the integral in (\ref{Eq:CellInt}) introduces a potential source of approximation error.  In practice, this error is much smaller than the over-approximation built into the various voxelization steps, yielding a PURR with a conservative probability of collision.  However, one can remove any doubt about the conservatism of the approach by introducing an upper bound on the trilinear approximation error into the formulation.  We call  this approximation error bound the \textit{collision offset factor}, $\alpha$.   

\begin{definition}[Collision Offset Factor]
    The collision offset factor $\alpha$ is a map-wide upper bound on the difference between (i) the maximum collision probability achieved by integrating the NeRF density $\rho$ over the robot kernel (i.e. the ``true'' collision probability) and (ii) the collision probability using the trilinearly-interpolated density $\hat{\rho}$ from  (\ref{trilinear}),
    \begin{equation*}
    \label{Eq:CollisionOffsetFactor}
\begin{split}
    \alpha \le \min_{i,j,k}\Big\{ & \min_{\mathbf{p} \in v_{ijk}, \mb{R} \in \mathbb{SO}(3)}Pr(X \leq N_{aux}^{max};\int_{B(\mathbf{p}, \mb{R})} \frac{\gamma \rho(\mb{x})}{A_{aux}}\; d\mb{x}) \\&  - \mathbb{P}(v_{ijk})\Big\}.
\end{split}
\end{equation*}
\end{definition}
Finally, if $\alpha$ exists, then (\ref{purr_eqn}) and consequently our PURR are inflated with this collision offset factor to give the PURR a rigorous collision probability guarantee
\begin{equation}
\label{purr_general_eqn}
    \mathbb{P}(v_{ijk}) = Pr(X \leq N_{aux}^{max}; \mathbb{I}_r(v_{ijk})) < \sigma - \alpha.
\end{equation}

\begin{theorem}
Given a collision offset $\alpha$ and the desired collision probability threshold $\sigma$, a robot with position $\mathbf{p}$ in the complement of $\mathbb{P}$ is guaranteed to be probabilistically safe.
\end{theorem}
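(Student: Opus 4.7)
The plan is to unwind the definitions backward from the PURR condition \eqref{purr_general_eqn} and chain three inequalities: the PURR threshold, the convolutional over-bound \eqref{robo_intensity_grid}, and the Collision Offset Factor. The target is to show that for any $\mathbf{p}\in v_{ijk}$ with $v_{ijk}$ in the complement of $\mathbb{P}$, and any orientation $\mb{R}\in\mathbb{SO}(3)$, the true safety probability
\[
Pr\!\left(X \le N_{aux}^{max};\; \int_{B(\mathbf{p},\mb{R})} \tfrac{\gamma \rho(\mb{x})}{A_{aux}}\, d\mb{x}\right)
\]
is at least $\sigma$, which is exactly the definition of probabilistic safety.

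First I would translate the hypothesis. Because $v_{ijk}$ lies in the complement of $\mathbb{P}$, the strict inequality in \eqref{purr_general_eqn} fails, so $Pr(X \le N_{aux}^{max};\,\mathbb{I}_r(v_{ijk})) \ge \sigma - \alpha$. Next I would invoke the over-approximation chain in \eqref{robo_intensity_grid}: the robot kernel $\mathbb{K}(v_{ijk})$ contains every voxel that could intersect $B(\mathbf{p},\mb{R})$ for any $\mathbf{p}\in v_{ijk}$ and any $\mb{R}\in\mathbb{SO}(3)$, so $\mathbb{I}_r(v_{ijk}) \ge \int_{B(\mathbf{p},\mb{R})} \tfrac{\gamma \hat{\rho}(\mb{x})}{A_{aux}}\, d\mb{x}$ pointwise in pose. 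Since the Poisson CDF $m\mapsto Pr(X\le N_{aux}^{max};\,m)$ is monotonically decreasing in its intensity argument (larger expected particle count means lower probability of staying below a fixed count), the larger intensity $\mathbb{I}_r$ yields the smaller safety probability, giving
\[
Pr\!\left(X \le N_{aux}^{max};\, \int_{B(\mathbf{p},\mb{R})} \tfrac{\gamma \hat{\rho}(\mb{x})}{A_{aux}}\, d\mb{x}\right) \ge Pr(X \le N_{aux}^{max};\,\mathbb{I}_r(v_{ijk})) \ge \sigma - \alpha.
\]

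Finally I would close the gap between $\hat{\rho}$ and $\rho$ using the Collision Offset Factor. By its defining inequality, for every voxel the minimum over $(\mathbf{p},\mb{R})$ of the true-density safety probability minus the $\hat{\rho}$-based safety probability is at least $\alpha$; reading $\mathbb{P}(v_{ijk})$ in that definition as the scalar safety probability $Pr(X\le N_{aux}^{max};\,\mathbb{I}_r(v_{ijk}))$ (the overloaded notation is the one subtlety I would flag explicitly), this yields
\[
Pr\!\left(X \le N_{aux}^{max};\, \int_{B(\mathbf{p},\mb{R})} \tfrac{\gamma \rho(\mb{x})}{A_{aux}}\, d\mb{x}\right) \ge Pr(X\le N_{aux}^{max};\,\mathbb{I}_r(v_{ijk})) + \alpha \ge (\sigma-\alpha) + \alpha = \sigma,
\]
for every $\mathbf{p}\in v_{ijk}$ and every $\mb{R}$, which is the required probabilistic safety.

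The main obstacle I expect is the clean handling of the notational overload on $\mathbb{P}$ (binary indicator in \eqref{purr_eqn} versus scalar probability inside the Collision Offset Factor definition), together with justifying that such an $\alpha$ exists and absorbs both the sign and magnitude of the trilinear approximation error; once those bookkeeping issues are nailed down, the result is a one-line telescoping of the three inequalities above, with the monotonicity of the Poisson CDF doing the essential work of converting intensity over-approximations into safety-probability lower bounds.
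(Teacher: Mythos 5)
Your proof is correct and follows essentially the same route as the paper's own (one-sentence) argument: chain the over-approximation in \eqref{robo_intensity_grid} with the collision offset factor and the PURR threshold \eqref{purr_general_eqn} to lower-bound the true safety probability by $\sigma$. You simply make explicit what the paper leaves implicit---the monotonicity of the Poisson CDF in its intensity argument and the scalar-versus-binary overload of $\mathbb{P}(v_{ijk})$ in the offset definition---which is a fair and useful clarification rather than a different approach.
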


\begin{proof}
Following the expressions in (\ref{robo_intensity_grid}), the `$\approx$' in the last line becomes `$\ge$' for a collision offset factor, $\alpha$, that satisfies Definition \ref{Eq:CollisionOffsetFactor}.  Therefore, the resulting PURR is an upper bound on the probabilistic collision constraint of (\ref{chance_constraint}).
\end{proof}

\begin{remark}
If the discretizations match between the PURR and a voxel-based NeRF architecture, as in \cite{plenoxels} then $\alpha$ is identically 0. In practice, we still set $\alpha$ to 0 regardless of the NeRF architecture, and we find that the PURR free space is safe invariant to the size of the cells used in trilinear interpolation (Fig. \ref{fig:grid_ablation}). We conclude that $\alpha \approx 0$ (i.e. trilinear interpolation closely approximates the neural network) or $\alpha \geq 0$ (i.e. trilinear interpolation is over-approximating the network). 
\end{remark}

\begin{remark} Although voxel representations are undesirable in memory compared to compact neural networks, we note that the PURR is binary. The memory footprint can be further reduced by using octrees and by storing the PURR with a compression scheme, e.g, with hashing.
\end{remark}
% \subsection{Extensions to State Uncertainty}\label{state_uncertainty}

% \begin{remark}
% The creation of a PURR with state uncertainty $\hat{\mathbb{P}}$ is located in Appendix \ref{app:state_uncertain_purr}. The state uncertain PURR $\bar{\mathbb{P}}$ retains the original probabilistic collision guarantee.
% \end{remark}

\subsection{Path Planning in the PURR}
We now turn to the problem of chance-constrained trajectory optimization through the PURR. In particular, we seek to plan a dynamically feasible path for a robot through the environment such that all points along the trajectory satisfy a chance constraint on collision (rather than enforcing the chance constraints at discrete ``knot points'' along the trajectory). In particular, we seek to find trajectories that are \emph{probabilistically safe}. 

\begin{definition}[Probabilistically Safe Trajectory]
A trajectory is probabilistically safe if all points in the trajectory are point-wise probabilistically safe.
\end{definition}

%% This declares a command \Comment
%% The argument will be surrounded by /* ... */
% \SetKwComment{Comment}{/* }{ */}

% \begin{algorithm}
% \caption{Trajectory generation on a PURR}\label{alg:two}
% \KwData{PURR $P$, initial pos. $\mb{p}_0$, final pos. $\mb{p}_f$}
% \KwResult{Control points $\{\mb{s}_k^i\}$ of prob. safe traj.}
% $\bar{\mb{p}}_\text{discrete} \gets$ \textproc{A$^*^}$(\mb{p}_0, \mb{p}_f)$\;
% $\{BB_i\} \gets$ \textproc{boundingBoxes}$(\bar{\mb{p}}_\text{discrete})$\;
% $\mb{s}_k^i \gets$ \textproc{bezier}$(\mb{p}_0, \mb{p}_f, \{BB_i\})$ \Comment*[r]{QP solve}
% \end{algorithm}

We propose an algorithm containing three components used to create these safe, continuous trajectories. The first step is to find an initial, discrete path through the free space of the PURR by solving a constrained shortest path problem (Fig. \ref{fig:path_explainer}a) from our initial position $\mb{p}_0$ to our final position $\mb{p}_f$ (e.g., using A$^*$). While dynamically infeasible, this path provides a connected, collision-free path through the PURR that we will refine into a smooth, feasible trajectory. The second step is to create a ``tube'' of bounding boxes around this initial path that is not in collision with the PURR (Fig. \ref{fig:path_explainer}b). We opt for this design choice because our emphasis in this paper is in quantifying collision risk in the NeRF in combination with a relatively simple planning scheme. One can introduce more flexible and sophisticated planning schemes \cite{liu-safe-flight-corridors, toumieh-convexdecomposition} to improve on our method. Finally, we  generate a smooth curve connecting our initial and final positions by solving a constrained convex optimization, requiring the curve to lie in the free ``tube'' generated previously (Fig. \ref{fig:path_explainer}c). We call the resulting trajectory planning algorithm Collision Avoidance Through Neural Implicit Probabilisitc Scenes (CATNIPS).  CATNIPS executes in real-time given a pre-computed PURR map.

\subsubsection{A$^*$ Search}
We first find a rough, discrete initial path through the NeRF using an A$^*$ search over the voxel grid defining the PURR free space. Specifically, given an initial position $\mb{p}_0$ and final position $\mb{p}_f$ of the robot, we find a minimum-length (measured in the Manhattan distance) path between the corresponding initial and final voxels. We search a 6-connected graph, i.e., the robot can move into neighboring free voxels of the PURR along the $x$-, $y$-, and $z$-axes. Using A$^*$ with the usual heuristic (distance to goal, not considering collision) yields a connected, collision-free, but dynamically infeasible path from the start to the goal.

\subsubsection{Bounding Box Generation}
%The primary motivation of the bounding boxes is the fact that if the continuous trajectory can live within the union of these boxes that exist in the complement of the PURR, the trajectory will be safe. Additionally, it will be less conservative than a trajectory existing in the sequence of cells found by the initialization if the union of the bounding boxes are a superset of this initialized cell sequence.

We now seek to refine the discrete, collision-free path returned by A$^*$ into a continuous trajectory that is energy-efficient, and dynamically feasible, for our robot. To this end, we first generate a ``tube'' around our A$^*$ path that is both large (so we minimally constrain our trajectory optimization) and lies in the free space of the PURR (so trajectories in this tube still remain collision-free). We represent this tube as a union of bounding boxes, as shown in Fig.~\ref{fig:path_explainer}b. 

To generate these bounding boxes, we first split the A$^*$ trajectory into straight-line segments (i.e., if the path returned by A$^*$ begins by moving along the $z$-axis for 6 voxels, we join these into a single line segment between the start and endpoint of this sequence). We then expand a bounding box around each line segment by ``marching'' each face along its normal direction until it is marginally in collision with the PURR (i.e., at least one cell on the face borders an unsafe cell). To speed collision-checking for the prospective boxes, we convert the PURR to a KD-tree representation for this step.

%With each segment relating to a sequence of voxels, we can grow a bounding box from this initial set of voxels. We grow this box until all the faces are marginally in collision with the PURR, which for the purpose of collision checking is stored as a KD-tree.  

Once this process is complete, we now have one bounding box for each line segment in our original A$^*$ path; the union of these bounding boxes both lies in the free space of the PURR, and contains at least one connected, collision-free path between our initial and final positions. However, for voxel grids with fine spatial resolution, the number of bounding boxes will grow, adding to computational complexity; thus, as a final step we eliminate all ``similar'' bounding boxes (i.e., any bounding box whose volume has sufficient overlap with a bounding box earlier in the sequence) to generate a simplified representation. We find that these simple axis-aligned rectangular bounding boxes are more well-behaved in dense voxel grids where narrow corridors exist, while general polytopic alternatives like \cite{liu-safe-flight-corridors} introduce numerical instability into the planner due to acute corners in the safe corridors (demonstrated by \cite{toumieh-convexdecomposition}).

\subsubsection{Smooth Trajectory Generation via Bézier Curves}
The final step of our planner is to generate a smooth trajectory that lies entirely in the union of the bounding boxes. To do this, we represent our trajectory as a connected series of Bézier curves. In particular, a Bézier curve in $\mathbb{R}^n$, of order $N$, is given by
\begin{equation}
    \mathbf{p}(t; \mathbf{s}_k) = \sum_{k=0}^N \binom{N}{k} (1-t)^{N-k} t^k \mathbf{s}_k,
\end{equation}
where $\mb{s}_k \in \mathbb{R}^n$ are a set of ``control points'' defining the geometry of the curve, and the curve is traced by a free parameter $t \in [0, 1]$. For any $t \in [0, 1]$, the Bézier curve $\mb{p}(t)$ is simply an interpolation of the control points $\mb{s}_k$, which means the parametric curve lies in the convex hull of the control points \cite{joybernstein}. Thus, to generate a probabilistically safe trajectory through the PURR, we find a set of Bézier curves connecting our initial position $\mb{p}_0$ and final position $\mb{p}_f$, whose control points lie in the bounding boxes generated previously; since each Bézier curve must lie in the convex hull of its control points, the entire curve will lie in the complement of the PURR. We note that this is a common method for enforcing safety constraints in the path planning literature \cite{gasparetto2007new}. %\textcolor{red}{Mac: Add a reference for this trick.  Is there a paper that is considered to have introduced this in robotic planning?}

% An immediate result is the linearity of the spline with respect to the control points $\mathbf{s}_k \in \mathbb{R}^3$, which we will utilize to solve for these control points efficiently.

% To enforce safety of these splines, we leverage the \emph{convex hull} property \tc{cite} [], which states that the Bézier curve for $t \in \left[0, 1 \right]$ lives within the convex hull of its control points. If we constrain the control points to lie within a bounding box found by the method above (which themselves are safe), the convex hull is necessarily a subset of the bounding box. As a result, the resulting spline is guaranteed to be a probabilistically safe trajectory.

\begin{figure}[t]
 \centering
 \includegraphics[width=0.49\textwidth]{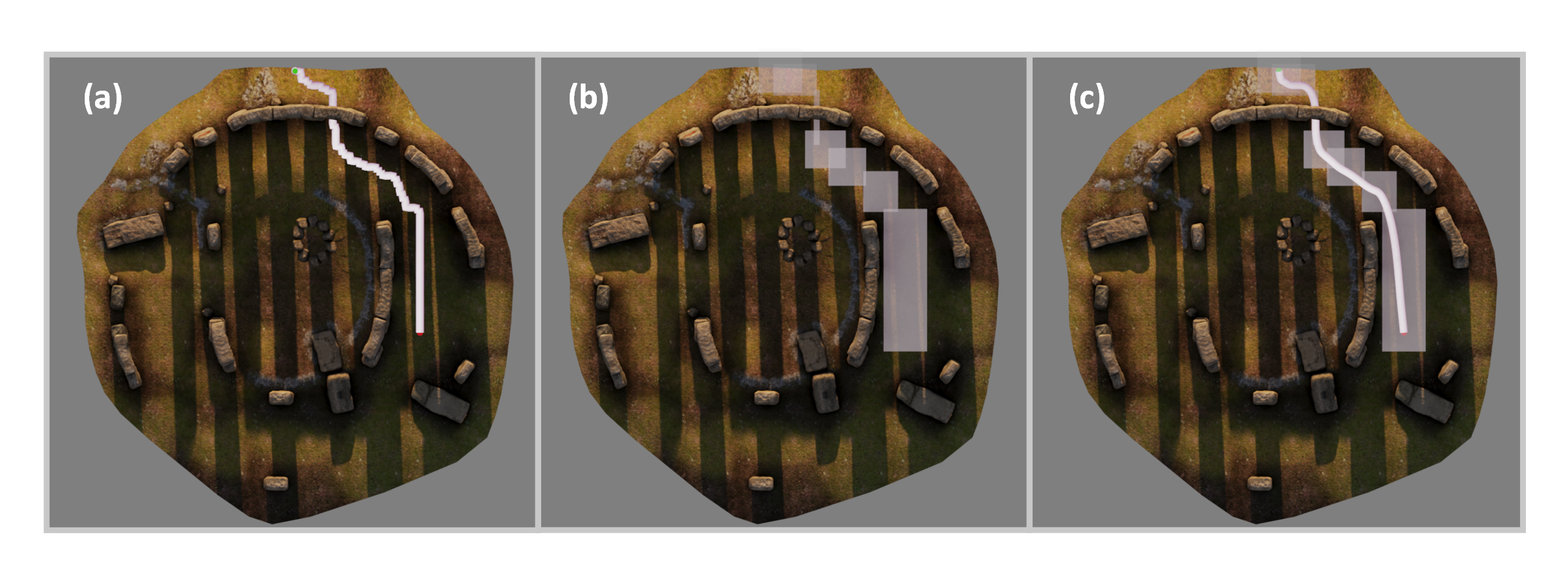}
 \caption{(a) Discrete path returned by A$^*$, (b) Union of bounding boxes containing subsets of the A$^*$ path whilst remaining strictly in the complement of the PURR, (c) Smooth path represented as the union of Bézier curves whose control points lie within a particular bounding box.}
 \label{fig:path_explainer}
\end{figure}

% \subsubsection{Safe Trajectory Solver as a Quadratic Program}
To find this trajectory, suppose we have $L$ bounding boxes $\{\mathbb{B}_1, \ldots, \mathbb{B}_L\}$ generated from the previous step. We then find a set of $L$ Bézier curves with control points given by $\mb{s}_k^i$, constraining all control points of the $i^\text{th}$ curve to lie in the corresponding bounding box $\mathbb{B}_i.$ Since the Bézier curves are linear functions of the control points, we can in turn represent the $i^\text{th}$ curve as $\mathbf{p}_i(t) \equiv \beta(t) \mathbf{s}^i$, where $\mathbf{s}^i \in \mathbb{R}^{n (N+1)}$ is a concatenated vector of all $N + 1$ control points for curve $i$, and $\beta(t): [0, 1] \mapsto \mathbb{R}^{n \times n(N+1)}$ is a coefficient matrix that only depends on the curve parameter $t$. We can similarly represent higher derivatives of the curve as $\mathbf{p}_i^{(d)}(t) = \beta^{(d)}(t)\mathbf{s}^i$. Since we are only concerned with positions and their derivatives, $n = 3$. We refer the reader to \cite{joybernstein} for a more detailed treatment of Bézier curves and splines.

To help smooth the spline and discourage looping behavior, we introduce the objective
\begin{equation*}
    J(\mathbf{s}^1, \ldots \mb{s}^L) = \sum_{i=1}^L \left(\int_0^1 \lvert\lvert \beta^{(d)}(t)\mathbf{s}^i\rvert\rvert_2^2 dt + \sum_{k=0}^{N-1} \lvert\lvert\mathbf{s}^i_k -\mathbf{s}^i_{k+1}\rvert\rvert_2^2
    \right),
\end{equation*}
which is quadratic in our decision variables $\mb{s}^i$. A typical choice is to penalize the snap of the trajectory ($d = 4$) as a proxy for control effort \cite{mellinger_minsnap}. 

To generate our desired trajectory, we then solve the following optimization:
\begin{align}
\min_{\mb{s}^1, \ldots, \mb{s}^L} \quad& J(\mb{s}^1, \ldots,  \mb{s}^L) \label{eq:traj_qp}\\
\text{s.t.} \quad& \mb{s^i_j} \in \mathbb{B}^i, \quad &\forall i \leq L, \; j \leq N \nonumber \\
&\beta^{(d)}(1) \mb{s}^i = \beta^{(d)}(0) \mb{s}^{i+1}, &\forall i \leq L, \; d \leq D \nonumber\\
&\beta(0)\mb{s}^1 = \mb{p}_0, \nonumber\\
&\beta(1) \mb{s}^L = \mb{p}_f \nonumber .
\end{align}
In particular, we constrain the control points of every segment so that $\mb{s}^i$ must lie in the corresponding bounding box $\mathbb{B}_i$, which defines a set of linear inequaities in $\mb{s}^i$. We also enforce continuity of each spline up to a desired derivative $D$, which defines a set of linear equality constraints. Finally, we enforce the boundary conditions of our trajectory, i.e., that the curve begins at our initial position $\mb{p}_0$ and ends at our final position $\mb{p}_f$. We choose to optimize Bézier curves of order $N=8$, to balance the expressiveness of our model (which needs non-trivial derivatives up to order $d=4$) with the number of parameters needed to specify the curve. 

Since our objective is quadratic in the control points, and our constraints are defined by linear inequalities and equality constraints, the resulting optimization \eqref{eq:traj_qp} is a quadratic program (QP) that can be solved in real time.

\begin{corollary}
The trajectory given by the solution of the QP \eqref{eq:traj_qp} is probabilistically safe.
\end{corollary}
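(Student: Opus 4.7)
The plan is to chain three ingredients together: the convex-hull property of Bézier curves, the convexity of the axis-aligned bounding boxes $\mathbb{B}^i$, and Theorem~1, which certifies point-wise probabilistic safety for any robot position in the complement of the PURR. First I would fix an arbitrary point $\mathbf{p}$ on the trajectory returned by the QP~\eqref{eq:traj_qp}. Since the trajectory is a concatenation of Bézier segments, $\mathbf{p}$ lies on some segment $\mathbf{p}_i(t) = \beta(t)\mathbf{s}^i$ for some index $i \le L$ and some parameter $t \in [0,1]$.

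Next I would invoke the standard convex-hull property of Bézier curves from \cite{joybernstein}: the image of $\mathbf{p}_i(\cdot)$ is contained in the convex hull of its control points $\{\mathbf{s}^i_0, \ldots, \mathbf{s}^i_N\}$. The QP's containment constraint $\mathbf{s}^i_j \in \mathbb{B}^i$ forces each control point of segment $i$ into the axis-aligned box $\mathbb{B}^i$. Because $\mathbb{B}^i$ is convex, it coincides with its own convex hull, so the convex hull of the $N+1$ control points is itself contained in $\mathbb{B}^i$. Hence $\mathbf{p} \in \mathbb{B}^i$.

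Finally I would invoke the construction of the bounding boxes: each $\mathbb{B}^i$ was grown from the corresponding A$^*$ segment by marching its faces outward only until marginal adjacency with unsafe PURR cells, so every cell inside $\mathbb{B}^i$ is a free cell and $\mathbb{B}^i$ lies in the complement of $\mathbb{P}$. Therefore $\mathbf{p}$ lies in a voxel in the complement of the PURR, and Theorem~1 yields that the robot positioned at $\mathbf{p}$ is probabilistically safe. Since $\mathbf{p}$ was arbitrary along the trajectory, every point on the trajectory is point-wise probabilistically safe, which is exactly the definition of a probabilistically safe trajectory.

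The only real obstacle, and it is a mild one, is being explicit that the closed boxes $\mathbb{B}^i$ (not merely their interiors) lie in the free space of the PURR; this follows immediately from how the marching-face construction terminates, so no additional technical work is needed beyond stitching the convex-hull containment chain $\mathbf{p}(t) \in \mathrm{conv}\{\mathbf{s}^i_j\} \subset \mathbb{B}^i \subset \mathbb{P}^c$ to the safety guarantee already established in Theorem~1.
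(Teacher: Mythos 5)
Your argument is correct and follows essentially the same route as the paper's own proof: the containment chain $\mathbf{p}(t) \in \mathrm{conv}\{\mathbf{s}^i_j\} \subseteq \mathbb{B}^i \subseteq \mathbb{P}^c$ via the Bézier convex-hull property, followed by the point-wise safety guarantee of Theorem~1 and the definition of a probabilistically safe trajectory. You simply spell out explicitly what the paper states tersely, so no changes are needed.
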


\begin{proof}
    The QP \eqref{eq:traj_qp} constrains each Bézier curve to live within a bounding box that is probabilistically safe, rendering each curve safe. The resulting trajectory, given by  the union of the Bézier curves is therefore probabilistically safe.
\end{proof}

\begin{remark}
\label{remark:probabilistically-safe}
    We emphasize that all trajectories are probabilistically safe in that all points in any trajectory satisfies the inter-penetration constraint (\ref{chance_constraint}) with probability $\sigma$. This is not equivalent to the statement that some $\sigma$ fraction of all trajectories do not contain any points that violate the inter-penetration constraint.
\end{remark}

\begin{remark}
    If the robot system dynamics is differentially flat such that its position is a subset of the flat outputs, then the paths generated by the proposed QP \eqref{eq:traj_qp} (with D set to the highest derivative of position in the flat outputs) are dynamically feasible. Therefore a robot tracking a trajectory from this planner remains probabilistically safe. 
\end{remark}

\begin{remark}
    We note that each segment of the trajectory returned by our planner is parameterized by a simple curve parameter $t$, which need not correspond to time. However, since we assume our system is differentiably flat, there exists a time scaling such that the curve is dynamically feasible. To resolve this, we use a simple time rescaling (as in \cite{mellinger_minsnap}) to generate the final trajectory as a function of time.  
    % \textcolor{red}{Let's include a remark on dealing with time (mapping the distance parameter in the Bezier curve to a time duration), referencing Mellinger to note the flexibility here.}
\end{remark}

\section{Numerical Results}
\label{Sec:Results}

In this section, we study our proposed chance-constrained trajectory optimizer on the simulated Stonehenge scene and real Statues and Flightroom environments. The real scenes were captured using a hand-held phone camera with poses extracted from COLMAP. Using our proposed trajectory optimizer, we generate trajectories for a simulated and real quadrotor flying through the scene, and study the safety and conservativeness of trajectories generated across a large number of initial and final conditions. Using the same path planning algorithm, we perform a comparison between the PURR and a baseline voxel occupancy representation obtained by thresholding the raw NeRF density at a desired density level. We also compare these methods to the authors' previous work NeRF-Nav \cite{adamkiewicz2022vision}. Because this method requires an A* initialization, we use the same A* initialization for both the baseline grid and NeRF-Nav. Specifically, the NeRF-Nav A* initialization is generated from the baseline density grid corresponding to the cutoff $\rho = 10^2$ (the most conservative cutoff). 

We demonstrate that both voxel methods are more computationally efficient than NeRF-Nav, and also generate trajectories that are safer (with fewer collisions) and less conservative (shorter paths). Further, we find that our method, CATNIPS, allows the user to set a clearly defined probability threshold for collision. In contrast, the density threshold baseline does not give such a probabilistic guarantee. In other words, similar behavior can be obtained from a density thresholded map, but this requires a user to tune the density threshold through trial and error to reach a desired qualitative level of safety, and thresholds that work for one environment may not generalize to others. Even after tuning to get good empirical behavior, the baseline offers no accompanying safety guarantee.
\subsection{Algorithm Performance}

\begin{figure}[]
     \centering     
     \includegraphics[width=\columnwidth]{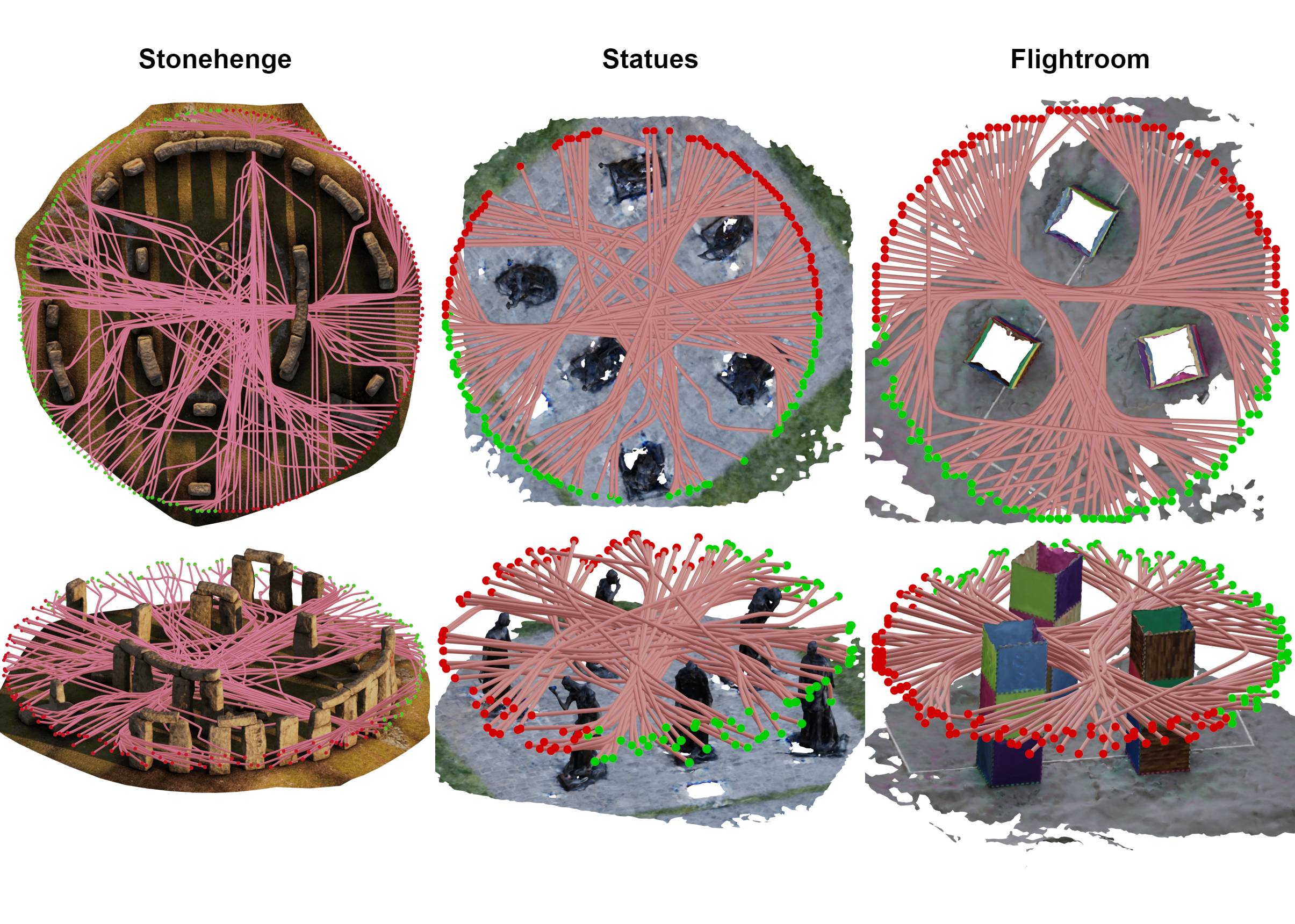}
    \caption{Generated safe paths across 100 configurations from Stonehenge, Statues, and Flightroom, visualized from the top and sides. Both Statues and Flightroom NeRFs were trained from images of the real environments.}
    \label{fig:paths-qualitative}
\end{figure}

\begin{figure*}     
    \centering
     \includegraphics[width=\textwidth]{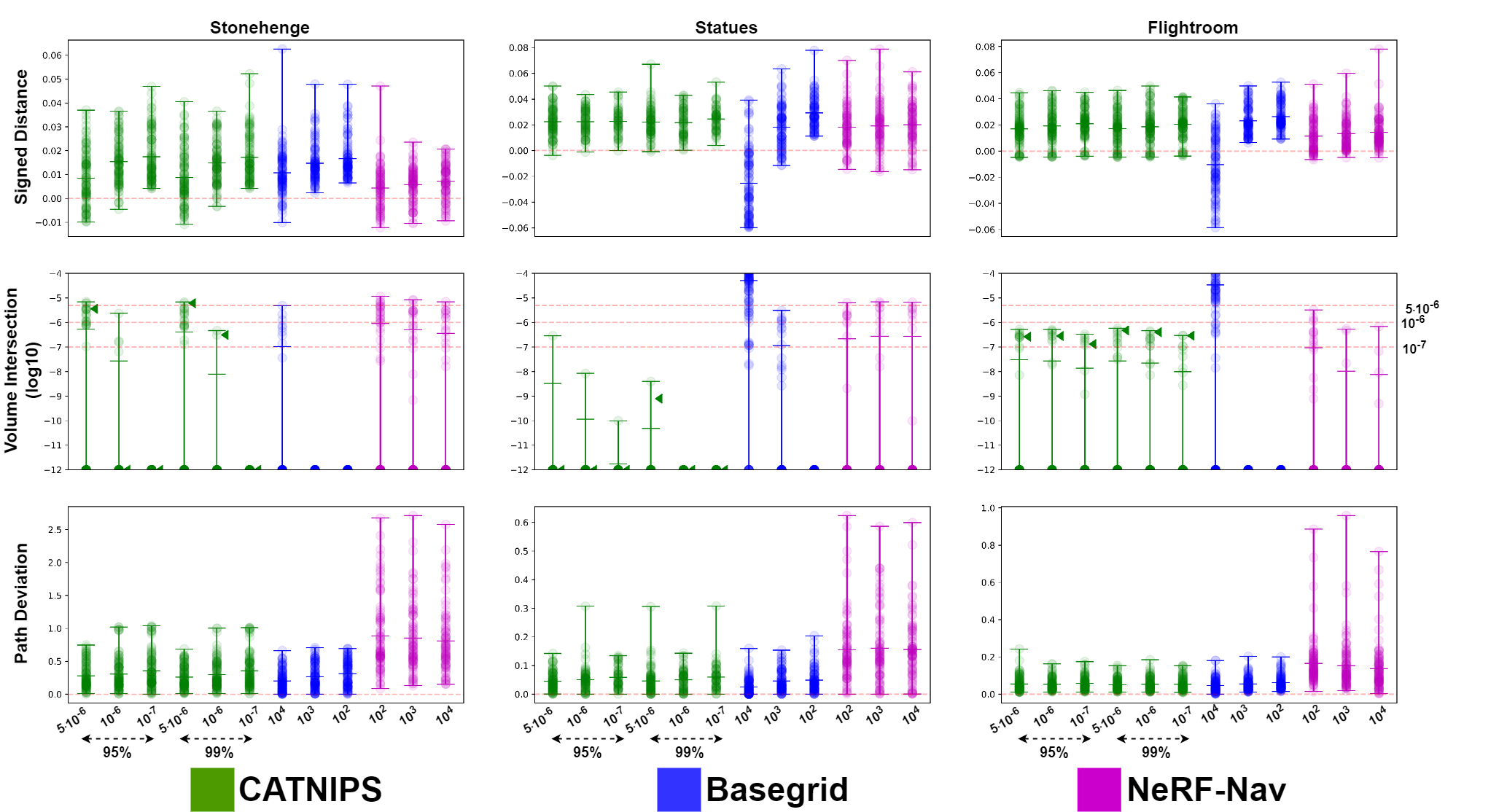}
    \caption{Statistics on distances to obstacles, inter-penetration, and path conservativeness over 100 trajectories for each environment (Stonehenge, Statues, Flightroom). We benchmark our PPP method (three different inter-penetration distances, each at at $95\%$ and $99\%$ probability) against the paths using the baseline grid and NeRF-Nav \cite{adamkiewicz2022vision}. Whiskers indicate max/mean/min over all trajectories, and the density of color represents the spread. Top: The minimum distance to the ground-truth mesh for every trajectory. Our method uses interpretable parameters, such that we can tune for safety (lower volume intersection) without being overly conservative (lower path deviation).  Meanwhile, the effect of the density cutoff in the baseline grid is unpredictable across scenes, and the NeRF-Nav paths can lead to collision violations (low whiskers) and overly-conservative paths. Mid: The maximum inter-penetration volume per trajectory. Although we only make claims about point-wise safety, we see that trajectory-wise safety is approximately satisfied (i.e. arrowheads representing 95, 99$\%$ of trajectories tend to be below the specified inter-penetration indicated by red dotted lines). Bottom: Difference between the minimum length of a straight line path and the executed path. We see that both CATNIPS and the baseline grid are less conservative than NeRF-Nav.}
    \label{fig:comparison}
\end{figure*}

\begin{figure}     
    \centering
     \includegraphics[width=\columnwidth]{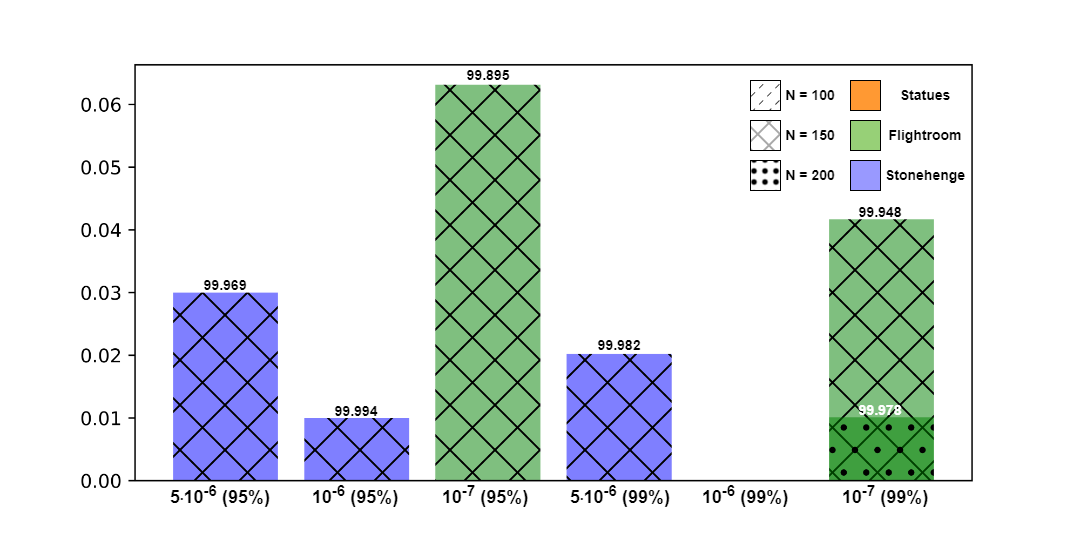}
    \caption{Effect of grid discretizations and environments on safety. Bars represent fraction of trajectories out of 100 that contain at least one point greater than the specified  inter-penetration volume. This ablation is performed over specified inter-penetration volumes, collision probabilities $\sigma = (95\%, 99\%)$, grid discretizations $N = (100, 150, 200)$, and environments (Stonehenge, Statues, Flightroom). The total percent of points across all trajectories below the specified inter-penetration volume for a specific combination of parameters and scenes is indicated above the bar. Combinations not shown means that all points in that setting were below the specified inter-penetration volume. We see that point-wise safety is always satisfied, and trajectory-wise safety is approximately valid.}
    \label{fig:grid_ablation}
\end{figure}

Qualitative results of the proposed method for 100 start and goal locations on a circle (perturbed randomly in the up-down direction) for all 3 environments are shown in Figure \ref{fig:paths-qualitative}. The PURR was generated using a resolution of 150 voxels per side, with probability threshold $\sigma = 0.95$, and $V_{max} = 10^{-6}$ specified volume penetration \footnote{Because the cameras are mapped to be within a unit box, all reported length scales are assumed to be in this normalized system unless explicitly stated.}. Moreover, because our quadrotor system is differentially flat with flat outputs in position, the robot can follow these paths with a standard differential flatness based control pipeline \cite{mellinger_minsnap}.

% All paths are visually collision-free, smooth, and non-degenerate. Figure \ref{fig:purr-viz} shows the resulting PURR of Flightroom at various values of $\sigma$, and the baseline occupancy map computed for a variety of NeRF density thresholds, $\rho$. We find that the PURRs are well-behaved across a wide range of $\sigma$ values, in contrast to the density threshold map. Visually, the density-based map is typically more conservative at low $\rho$ (e.g. no gap exists in the front pillars for $\rho=10^3$, but exists in $\sigma = 10^{-3}$) yet degenerates very quickly with large $\rho$. For example, the floor begins to disappear at $\rho = 10^4$ and is largely gone at $\rho=10^5$ compared to the PURR at $\sigma = 0.25$. Therefore, we argue that the well-behavedness of the PURR contributes to the safety and non-conservatism of the generated paths.

We first compare our method to two baselines: the ``baseline grid'' which computes occupancy from a density threshold, and NeRF-Nav \cite{adamkiewicz2022vision}, a gradient-based trajectory planner for NeRFs in Fig. \ref{fig:comparison}. We analyze their performance on 3 metrics: the minimum signed distance achieved during the trajectory to the ground-truth mesh (negative is within the mesh), the maximum inter-penetration volume achieved during the trajectory, and the difference in lengths between the generated trajectory and the shortest straight line path. The first two metrics quantify safety, while the last quantifies conservativeness. We evaluate the algorithms on the 100 randomized configurations distributed evenly on a circle. 

We choose to display the same 6 parameter combinations for CATNIPS, varying the specific inter-penetration and probability cutoff, over all scenes to demonstrate generalizability and interpretability. The cutoffs we choose to be reasonable thresholds of $95\%$ and $99\%$. The inter-penetration we choose to be some fraction of the robot body. For Stonehenge, $5\cdot 10^{-6}, 10^{-6}, 10^{-7}$ correspond to $13\%, 3\%, .3\%$, while for the real environments, these values correspond to $4\%, .8\%, .08\%$, respectively. 

To benchmark against the baselines, we vary the density cutoffs of the baseline grid and the collision penalty weight in NeRF-Nav. We again stress the lack of interpretable parameters for both NeRF-Nav and the baseline grid paths and their required parameter tuning to get a desired safety performance, which is impossible to know a priori as one does not have access to the ground-truth in reality. However, in order to benchmark the methods in good faith, we choose the density cutoffs on the baseline grid such that the performance on these metrics were similar to those of CATNIPS in a synthetic environment (i.e. Stonehenge), since we have access to its ground-truth mesh. We then use the same cutoffs for the real environments. In our experience, these are also thresholds that are typically used to extract meshes from NeRFs using marching cubes \cite{lorensen1987marchinga}. For NeRF-Nav, the collision penalty weights are chosen so that they are the dominant term in the loss.

We can see that NeRF-Nav trajectories are unsafe when compared to paths generated from either voxel method (negative is in collision with the mesh) over all collision loss weights $(10^2, 10^3, 10^4)$. As we increase the weighting on the collision penalty, we do see that the algorithm can be increasingly safe on average (higher SDF, lower volume intersection). However, such a high collision penalty ($10^4$) will typically cause numerical issues in the trajectory optimizer. Moreover, these trajectories in the worst case are simply less safe than either voxel method. Finally, these trajectories also deviate from the shortest path the farthest, illustrating conservatism and non-smooth paths.

The trajectories derived from the baseline grid can exhibit safe and non-conservative behavior. However, it is clear that the parameters necessary to achieve this behavior cannot be generalized over all scenes. This is evident for the cutoff $\rho = 10^4$, where safety performance in Stonehenge is reasonable, but we see unsafe performance in the real environments. 

We see that our method, CATNIPS, is safe (by construction) and non-conservative. On average, these trajectories are similar in conservativeness compared to the baseline grid paths, while exhibiting reasonable levels of safety and respond as expected to changes in parameters (greater SDF and lower intersection when decreasing specified volume intersection and/or increasing probability threshold). We draw the readers attention to the volume intersection metric (Fig. \ref{fig:comparison}, middle row) for CATNIPS, where trajectory-wise safety is expressed (please see Remark \ref{remark:probabilistically-safe} for the distinction). The arrowhead represents the $\sigma$ quantile over 100 trajectories, while the dotted lines represent the specified volume intersection. While we make no claims on full-trajectory rates of safety (i.e. the arrowhead need not be below the corresponding dotted line), we see that, indeed, a $\sigma$ proportion of the trajectories tends to be completely safe (with no unsafe points existing on the whole trajectory).

We validate the point-wise safety claims we make, as well as ablate CATNIPS over grid discretizations, in Fig. \ref{fig:grid_ablation}. The figure contains runs with parameter combinations of two different collision probabilities, three different specified volume intersections, three different grid discretizations (100, 150, 200), and three different environments. Note that some columns can contain multiple bars, representing different grid discretizations or environments while maintaining the same collision cutoff and volume intersection. Bar heights represent the fraction of all 100 trajectories that contain at least a point with volume intersection higher than what was specified for that combination of parameters. Numbers on top of these bars indicate the percentage of all points in all trajectories that fall below the specified volume intersection for the same parameter setting. Our theoretical claims pertain to the rate across all points on all trajectories (number on top of bars), yet we observe for our method the desired collision rate tends to hold across full trajectories as well (the height of the bars). Combinations not visualized mean there were no points in any trajectories that violated the volume intersection constraint.

Note that the reported percentage of all points being safe (all percentages greater than the probability cutoff) means that our derived point-wise probabilistic safety constraint is validated and that satisfaction of this constraint is invariant to the parameters. This makes (\ref{chance_constraint}), the PURR, and the planning architecture surrounding it generalizable to arbitrary environments and reasonable grid discretizations. This result also implies that the error introduced through trilinear interpolation of neural network-based density fields is small in terms of its impact on safety (i.e. collision offset factor $\alpha \geq 0$). This is especially attractive for real scenes where there is no way to validate safety a priori, and for applications where coarser grid discretizations are necessary for computational performance. Moreover, trajectory-wise safety (like in Fig. \ref{fig:comparison}) is generally satisfied over grid discretizations and scenes.

Here we would like to summarize several subtle points regarding collision violation to the reader. The violation of the specified volume penetration at some points (Fig. \ref{fig:comparison}, \ref{fig:grid_ablation}) is due to both the probabilistic nature of the collision constraint, and due to the fact that the NeRF does not exactly capture the ground truth surface. For the NeRF density to exactly represent the true surface, under our PPP interpretation, it would have to be exactly $0$ outside the surface, and $\infty$ inside.  In this case, collision would be deterministic because the only way to satisfy the chance constraint on collision would be to have no collision.  In practice, a NeRF density field cannot be either $0$ or $\infty$ both because of continuity in the representation and embedded uncertainty about where the true surface is.  Therefore, a collision with the true surface may occur with the prescribed probability. We believe that an accurate representation of uncertainty in perception-based planning must admit for some probability of collision, as there is always
a nonzero probability that perception errors have led to an incorrect estimation of occupancy
in the scene. We further alleviate this “NeRF-to-real” gap by embedding several conservative
approximations into our method through the construction of the PURR. Therefore, in practice,
we collide with the true surface less frequently than required by the collision constraint. Again,
this fact is illustrated in Fig. \ref{fig:grid_ablation}, where the collision constraint satisfaction is very close to 100$\%$ but not conservative enough to be unappealing to use (Fig. \ref{fig:paths-qualitative}, \ref{fig:comparison}). 

\begin{figure}     
    \centering
     \includegraphics[width=\columnwidth]{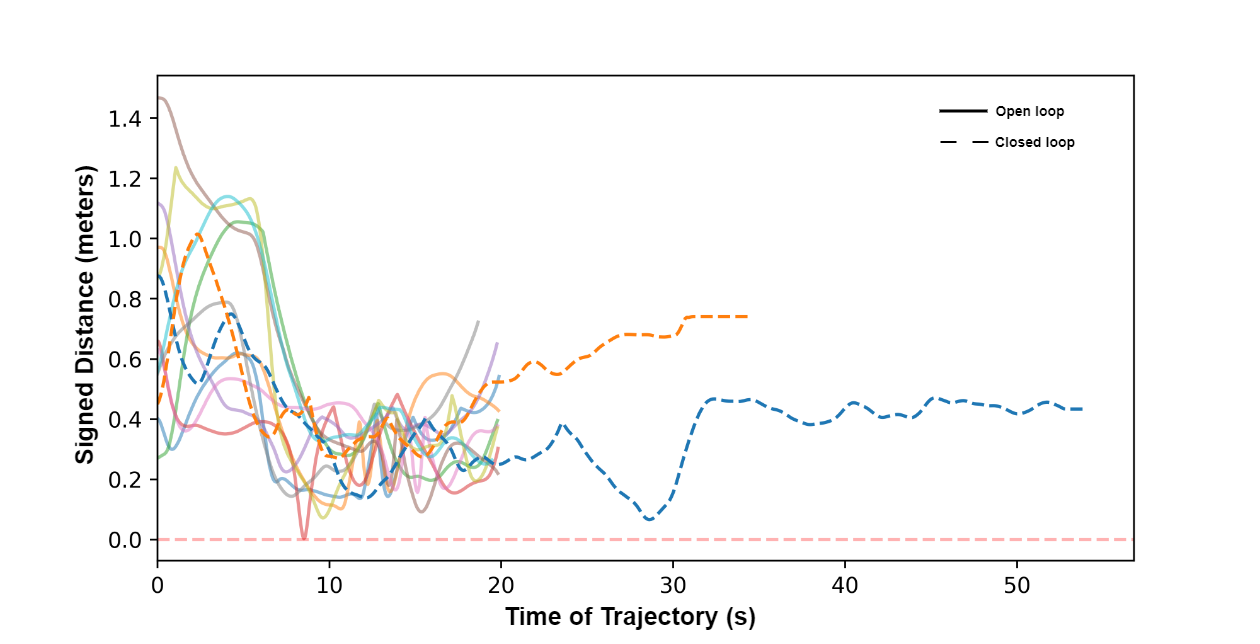}
    \caption{Signed distances along the executed trajectories on a real drone in the Flightroom environment. There were 10 open-loop trajectories, and 2 additional ones with online replanning. All trajectories are above 0 and hence safe.}
    \label{fig:real_experiments}
\end{figure}

Finally, we validate our CATNIPS pipeline on drone hardware experiments in our Flightroom environment. Using a pre-trained static NeRF, we compute 10 trajectories from start-goal points distributed on the perimeter of a circle around the scene, and drive the on-board controller to follow these waypoints (open-loop). Then, we choose two of these start-goal locations and run the CATNIPS A*, bounding box generation, and convex program online (choosing the next predicted point as a waypoint) while simultaneously updating the drone to follow the stream of waypoints (closed-loop). All trajectories are run until they reach the goal location. The open-loop trajectories are about 20 seconds long since they are pre-defined, while the closed loop trajectories can have varying times since they are not pre-defined. We see that all trajectories have a signed distance greater than 0 and are therefore safe (no collision). Even for the open-loop trajectory that comes close to 0, we visually verify a collision-less trajectory. The near-zero SDF is due to over-approximating the drone body with a bounding box when computing the signed distance.  

\subsection{Computation Times}
\begin{table}[!h]
\begin{center}
\begin{threeparttable}
\begin{tabular}{ |p{2.5cm}|p{2cm}|p{1.5cm}|}
\hline
\multicolumn{3}{|c|}{Computation Time (seconds)} \\
\hline
Operation & \textcolor{green}{CATNIPS}/\textcolor{blue}{Basegrid} & \textcolor{magenta}{NeRF-Nav}\\
\hline
Offline &  & \\
\hline
\hspace{3mm} Robot Kernel & 0.002 & 0.002\\
\hline
\hspace{3mm} PURR/Basegrid & 1.11 & 1.11\\
\hline
\hspace{3mm} Gradients & N/A & 9.31${}^{\star}$\\
\hline
Online &  & \\
\hline
\hspace{3mm} A* & 0.16 $\pm$ 0.05 & 0.16\\
\hline
\hspace{3mm} Bounding Box    & 0.12 $\pm$ 0.09 & N/A\\
\hline
\hspace{3mm} B-Spline    & 0.034 $\pm$ 0.029 & N/A\\
\hline
\hspace{3mm} Gradients    & N/A & 0.93${}^{\star \star}$\\
\hline
\end{tabular}
    \begin{tablenotes}
      \small
      \item  ${}^\star$ 1000 gradient steps.
      \item ${}^{\star \star}$ 100 gradient steps.
    \end{tablenotes}
  \end{threeparttable}
\caption{\label{comp. times} Timing results (performed on a laptop with an RTX 4060 GPU) between our voxel methods (CATNIPS and baseline grid) and NeRF-Nav. Because both voxel methods use similar operations, they have identical times. Since NeRF-Nav depends on an A* initialization, it also inherits some computation from the voxel methods. Offline operations only need to be performed once for a static NeRF environment, while online operations need to be performed whenever replanning occurs.}
\end{center}
\end{table}

The implementation of the above algorithms are performed in Pytorch on a laptop with an RTX 4060 GPU. Our method is built on top of NeRFStudio \cite{nerfstudio}. Little effort was made to optimize code for fast computation, so we expect these execution time could be substantially reduced. Moreover, for a fair comparison, we ported NeRF-Nav to NeRFStudio. In general, the planning portion of CATNIPS (A*, bounding box, and Bézier curve generation) operates at around 3 Hz. The operation from querying the NeRF density to the creation of the PURR runs at 1 Hz. The break-down of each operation is shown in Table \ref{comp. times}. As a promising direction for future work, one can further reduce the computation time for creating the PURR and optimizing trajectories in the PURR through parallelization and code optimization on a GPU. \footnote{The A* library \cite{dijkstra3d} we use allows users to pre-process a static voxel grid such that generating the A* initialization is a look-up operation that is near instant in exchange for a one-time processing cost of approximately a second.}. 

Note that the proposed method produces smooth trajectories from the current position to the goal. Also note that in an online re-planning scenario, usually only the next waypoint is tracked before the entire trajectory is updated. Thus, certain parts of our method can be adapted to only consider the vicinity of the robot, trading computation time for suboptimality. In comparison, NeRF-Nav takes longer to converge (if at all) to a reasonable tolerance, without any safety guarantees. We believe this is primarily due to the difficulty of optimizing its highly nonconvex objective and the many queries required to the density neural network within the trajectory optimizer. In order to produce the best performance from NeRF-Nav in terms of safety, we ran the algorithm for 1000 gradient steps.

% Under an online NeRF training scenario, the entire pipeline consisting of PURR creation (0.1 s), A* (0.1 s), and Bézier curve generation (0.02 s) operates at 4 Hz (Table \ref{comp. times}). If we assume a static NeRF scenario, the PURR can be created offline and the A* paths can be cached. More specifically, the A* implementation \cite{dijkstra3d} can simultaneously solve A* from the goal cell to all the other cells. Online replanning then consists of a lookup of the A* path from the current position ($5\times 10^{-4}$ s)  and solving for Bézier curves. Additional optimizations can be done in reducing PURR creation time, as well as investigating methods of creating a trajectory initialization quicker by processing on GPU. 

\section{Conclusion}
\label{Sec:Conclusion}
In this paper, we present a novel method for chance-constrained trajectory optimization through NeRF scenes. We present a method to transform the NeRF into a Poisson Point Process (PPP), which we use to generate rigorous collision probabilities for a robot body moving through the scene. Leveraging this expression for collision probability, we develop a fast method for online trajectory generation through NeRF scenes, which, offline, distills the NeRF density into a voxel-based representation of collision probability called the PURR. Using the PURR, we present an algorithm to plan trajectories represented as Bézier splines that guarantee a robot traversing the spline does not exceed a user-defined maximum collision probability. In numerical experiments, we show our proposed method generates safer and less conservative paths than a state-of-the-art method \cite{adamkiewicz2022vision} for trajectory planning through NeRFs, and also gives more well-behaved and more user interpretable results than a baseline planner that uses a threshold on the NeRF density as a proxy for collision probability. We also demonstrate that our pipeline can run in real-time.

This work opens numerous directions for future research. Since our entire pipeline (both PURR generation and trajectory optimization) can run at real-time rates, our planner could be combined with a NeRF-based state estimator (e.g., \cite{loc-nerf, nice-slam}) to perform active exploration or next-best-view planning on NeRFs, allowing a robot to autonomously explore a novel environment using only onboard vision. Building on navigation, another interesting direction is to tune the collision metric on-line during execution. Because the collision probability is differentiable with respect to the pose, it is possible to tune the collision probability online in response to data collected on-the-fly (e.g., sensed minimum distance to the nearest obstacle).  This could be implemented with auto-differentiation as part of a PyTorch-based planning pipeline. The probabilistic collision framework developed here could have interesting applications to problems like differentiable simulation of rigid bodies represented as NeRFs \cite{cleach2023differentiable} or planning for problems like contact-rich manipulation and locomotion. Finally, since our derivation a PPP from a NeRF is rigorous and generalizable, we hope that our interpretation of NeRFs will be useful to research beyond robotics, for example in computer vision and computer graphics. 

% The CATNIPS pipeline revolves around a novel interpretation of NeRFs as a Poisson Point Process that imbues it with probabilistic properties familiar to roboticists. The pipeline consists of (1) a voxel-based representation of the scene called a PURR that encodes satisfaction of a collision constraint over the robot body and (2) a path-planning algorithm utilizing Bézier curves to generate safe paths through the scene. We validate safety of our pipeline on the Stonehenge scene and show that it outperforms a Marching Cubes proxy and prior work \cite{adamkiewicz2022vision} in safety and conservatism. We also demonstrate that the pipeline can be run in real-time.  

\begin{appendices}

\section{Integration over Trilinearly Interpolated Cells}\label{app:trilinear_analytic}

For a trilinearly interpolated density over a cell $v_{ijk}$ given by (\ref{trilinear}) with local coordinates $(x, y, z) \in $ ($\left[a_x, b_x \right], \left[a_y, b_y \right], \left[a_z, b_z \right]$), the volume integral over the cell can be computed analytically as:

\begin{equation}
\begin{split}
\int_{a_x}^{b_x}\int_{a_y}^{b_y}\int_{a_z}^{b_z} \rho(x) dxdydz  &=  (b_x - a_x)(b_y - a_y)(b_z - a_z) \\
    & + \frac{c_2}{2}(b_y - a_y)(b_z - a_z)(b_x^2 - a_x^2)\\
    & +  \frac{c_3}{2}(b_x - a_x)(b_z - a_z)(b_y^2 - a_y^2)\\
    & + \frac{c_4}{2}(b_x - a_x)(b_y - a_y)(b_z^2 - a_z^2)\\
    & + \frac{c_5}{4}(b_z - a_z)(b_x^2 - a_x^2)(b_y^2 - a_y^2)\\
    & + \frac{c_6}{4}(b_x - a_x)(b_y^2 - a_y^2)(b_z^2 - a_z^2)\\
    & + \frac{c_7}{4}(b_y - a_y)(b_x^2 - a_x^2)(b_z^2 - a_z^2)\\
    & + \frac{c_8}{8}(b_x^2 - a_x^2)(b_y^2 - a_y^2)(b_z^2 - a_z^2)
\end{split}
\end{equation}
where $c_i$ are the coefficients of the trilinear interpolation.

\end{appendices}

% The acknowledgments are automatically included only in the final and preprint versions of the paper.
\section*{Acknowledgements}
We would like to thank Keiko Nagami, Adam Caccavale, Gadi Camps, and Jun En Low for their insights throughout this project.

%===============================================================================
\balance
\bibliographystyle{IEEEtran.bst}
\bibliography{citations.bib}

% Generated by IEEEtran.bst, version: 1.12 (2007/01/11)
\begin{thebibliography}{10}
\providecommand{\url}[1]{#1}
\csname url@samestyle\endcsname
\providecommand{\newblock}{\relax}
\providecommand{\bibinfo}[2]{#2}
\providecommand{\BIBentrySTDinterwordspacing}{\spaceskip=0pt\relax}
\providecommand{\BIBentryALTinterwordstretchfactor}{4}
\providecommand{\BIBentryALTinterwordspacing}{\spaceskip=\fontdimen2\font plus
\BIBentryALTinterwordstretchfactor\fontdimen3\font minus \fontdimen4\font\relax}
\providecommand{\BIBforeignlanguage}[2]{{%
\expandafter\ifx\csname l@#1\endcsname\relax
\typeout{** WARNING: IEEEtran.bst: No hyphenation pattern has been}%
\typeout{** loaded for the language `#1'. Using the pattern for}%
\typeout{** the default language instead.}%
\else
\language=\csname l@#1\endcsname
\fi
#2}}
\providecommand{\BIBdecl}{\relax}
\BIBdecl

\bibitem{mildenhall2020a}
B.~Mildenhall, P.~P. Srinivasan, M.~Tancik, J.~T. Barron, R.~Ramamoorthi, and R.~Ng, ``Nerf: Representing scenes as neural radiance fields for view synthesis,'' in \emph{European Conference on Computer Vision ({{ECCV}})}, 2020.

\bibitem{sucarIMAPImplicitMapping2021}
E.~Sucar, S.~Liu, J.~Ortiz, and A.~Davison, ``{{iMAP}}: {{Implicit}} mapping and positioning in real-time,'' in \emph{Proceedings of the International Conference on Computer Vision ({{ICCV}})}, 2021.

\bibitem{yen2021inerf}
L.~{Yen-Chen}, P.~Florence, J.~T. Barron, A.~Rodriguez, P.~Isola, and T.-Y. Lin, ``Inerf: {{Inverting}} neural radiance fields for pose estimation,'' in \emph{2021 {{IEEE}}/{{RSJ}} International Conference on Intelligent Robots and Systems ({{IROS}})}.\hskip 1em plus 0.5em minus 0.4em\relax {IEEE}, 2021, pp. 1323--1330.

\bibitem{yen2022nerfsupervision}
L.~{Yen-Chen}, P.~Florence, J.~T. Barron, T.-Y. Lin, A.~Rodriguez, and P.~Isola, ``{{NeRF-Supervision}}: {{Learning}} dense object descriptors from neural radiance fields,'' in \emph{{{IEEE}} Conference on Robotics and Automation ({{ICRA}})}, 2022.

\bibitem{driess2022learning}
D.~Driess, Z.~Huang, Y.~Li, R.~Tedrake, and M.~Toussaint, ``Learning multi-object dynamics with compositional neural radiance fields,'' in \emph{Conference on Robot Learning (CoRL)}, 2022.

\bibitem{ichnowski2021dex}
J.~Ichnowski*, Y.~Avigal*, J.~Kerr, and K.~Goldberg, ``Dex-{{NeRF}}: {{Using}} a neural radiance field to grasp transparent objects,'' in \emph{Conference on Robot Learning ({{CoRL}})}, 2020.

\bibitem{mueller2022instant}
T.~M{\"u}ller, A.~Evans, C.~Schied, and A.~Keller, ``Instant neural graphics primitives with a multiresolution hash encoding,'' \emph{ACM Trans. Graph.}, vol.~41, no.~4, pp. 102:1--102:15, Jul. 2022.

\bibitem{nerfstudio}
M.~Tancik, E.~Weber, R.~Li, B.~Yi, T.~Wang, A.~Kristoffersen, J.~Austin, K.~Salahi, A.~Ahuja, D.~McAllister, A.~Kanazawa, and E.~Ng, ``Nerfstudio: {A} framework for neural radiance field development,'' in \emph{SIGGRAPH}, 2023.

\bibitem{edelsbrunner2003surface}
H.~Edelsbrunner, ``Surface {{Reconstruction}} by {{Wrapping Finite Sets}} in {{Space}},'' in \emph{Discrete and {{Computational Geometry}}: {{The Goodman-Pollack Festschrift}}}, ser. Algorithms and {{Combinatorics}}, B.~Aronov, S.~Basu, J.~Pach, and M.~Sharir, Eds.\hskip 1em plus 0.5em minus 0.4em\relax {Berlin, Heidelberg}: {Springer}, 2003, pp. 379--404.

\bibitem{elfes1989using}
A.~Elfes, ``Using occupancy grids for mobile robot perception and navigation,'' \emph{Computer}, vol.~22, no.~6, pp. 46--57, Jun. 1989.

\bibitem{osher2003level}
S.~Osher and R.~P. Fedkiw, \emph{Level Set Methods and Dynamic Implicit Surfaces}.\hskip 1em plus 0.5em minus 0.4em\relax {New York}: {Springer}, 2003.

\bibitem{adamkiewicz2022vision}
M.~Adamkiewicz, T.~Chen, A.~Caccavale, R.~Gardner, P.~Culbertson, J.~Bohg, and M.~Schwager, ``Vision-only robot navigation in a neural radiance world,'' \emph{IEEE Robotics and Automation Letters (RA-L)}, vol.~7, no.~2, pp. 4606--4613, 2022.

\bibitem{lorensen1987marchinga}
W.~E. Lorensen and H.~E. Cline, ``Marching cubes: {{A}} high resolution {{3D}} surface construction algorithm,'' \emph{ACM SIGGRAPH Computer Graphics}, vol.~21, no.~4, pp. 163--169, Aug. 1987.

\bibitem{perceptionaware}
X.~Wu, S.~Chen, K.~Sreenath, and M.~W. Mueller, ``Perception-aware receding horizon trajectory planning for multicopters with visual-inertial odometry,'' \emph{IEEE Access}, vol.~10, pp. 87\,911--87\,922, 2022.

\bibitem{voxblox}
H.~Oleynikova, Z.~Taylor, M.~Fehr, R.~Siegwart, and J.~Nieto, ``{Voxblox: Incremental 3D Euclidean signed distance fields for on-board MAV planning},'' in \emph{IEEE/RSJ International Conference on Intelligent Robots and Systems (IROS)}, 2017.

\bibitem{fiesta}
L.~Han, F.~Gao, B.~Zhou, and S.~Shen, ``Fiesta: Fast incremental euclidean distance fields for online motion planning of aerial robots,'' \emph{IEEE/RSJ International Conference on Intelligent Robots and Systems (IROS)}, pp. 4423--4430, 2019.

\bibitem{wang2021neus}
P.~Wang, L.~Liu, Y.~Liu, C.~Theobalt, T.~Komura, and W.~Wang, ``Neus: Learning neural implicit surfaces by volume rendering for multi-view reconstruction,'' \emph{Conference on Neural Information Processing Systems (NeurIPS)}, 2021.

\bibitem{tong2023enforcing}
M.~Tong, C.~Dawson, and C.~Fan, ``Enforcing safety for vision-based controllers via control barrier functions and neural radiance fields,'' in \emph{IEEE International Conference on Robotics and Automation (ICRA)}.\hskip 1em plus 0.5em minus 0.4em\relax IEEE, 2023, pp. 10\,511--10\,517.

\bibitem{nice-slam}
Z.~Zhu, S.~Peng, V.~Larsson, W.~Xu, H.~Bao, Z.~Cui, M.~R. Oswald, and M.~Pollefeys, ``Nice-slam: Neural implicit scalable encoding for slam,'' in \emph{IEEE/CVF Conference on Computer Vision and Pattern Recognition (CVPR)}, June 2022.

\bibitem{nerf-slam}
A.~Rosinol, J.~J. Leonard, and L.~Carlone, ``{NeRF-SLAM: Real-Time Dense Monocular SLAM with Neural Radiance Fields},'' \emph{arXiv preprint arXiv:2210.13641}, 2022.

\bibitem{loc-nerf}
D.~Maggio, M.~Abate, J.~Shi, C.~Mario, and L.~Carlone, ``Loc-nerf: Monte carlo localization using neural radiance fields,'' in \emph{IEEE International Conference on Robotics and Automation (ICRA)}.\hskip 1em plus 0.5em minus 0.4em\relax IEEE, 2023, pp. 4018--4025.

\bibitem{active-nerf}
X.~Pan, Z.~Lai, S.~Song, and G.~Huang, ``Activenerf: Learning where to see with uncertainty estimation,'' in \emph{European Conference on Computer Vision (ECCV)}.\hskip 1em plus 0.5em minus 0.4em\relax Springer, 2022, pp. 230--246.

\bibitem{nerfensembles}
K.~Lin and B.~Yi, ``Active view planning for radiance fields,'' in \emph{Robotics Science and Systems}, 2022.

\bibitem{densitynerf}
N.~S{\"u}nderhauf, J.~Abou-Chakra, and D.~Miller, ``Density-aware nerf ensembles: Quantifying predictive uncertainty in neural radiance fields,'' in \emph{IEEE International Conference on Robotics and Automation (ICRA)}.\hskip 1em plus 0.5em minus 0.4em\relax IEEE, 2023, pp. 9370--9376.

\bibitem{snerf}
J.~Shen, A.~Ruiz, A.~Agudo, and F.~Moreno-Noguer, ``Stochastic neural radiance fields: Quantifying uncertainty in implicit 3d representations,'' in \emph{International Conference on 3D Vision (3DV)}.\hskip 1em plus 0.5em minus 0.4em\relax IEEE, 2021, pp. 972--981.

\bibitem{activermap}
\BIBentryALTinterwordspacing
H.~Zhan, J.~Zheng, Y.~Xu, I.~Reid, and H.~Rezatofighi, ``{ActiveRMAP: Radiance field for active mapping and planning},'' 2022. [Online]. Available: \url{https://arxiv.org/abs/2211.12656}
\BIBentrySTDinterwordspacing

\bibitem{goli2023bayes}
L.~Goli, C.~Reading, S.~Selll{\'a}n, A.~Jacobson, and A.~Tagliasacchi, ``Bayes' rays: Uncertainty quantification for neural radiance fields,'' \emph{arXiv preprint arXiv:2309.03185}, 2023.

\bibitem{plenoxels}
{Sara Fridovich-Keil and Alex Yu}, M.~Tancik, Q.~Chen, B.~Recht, and A.~Kanazawa, ``Plenoxels: Radiance fields without neural networks,'' in \emph{IEEE/CVF Conference on Computer Vision and Pattern Recognition (CVPR)}, 2022.

\bibitem{plenoctrees}
A.~Yu, R.~Li, M.~Tancik, H.~Li, R.~Ng, and A.~Kanazawa, ``{PlenOctrees} for real-time rendering of neural radiance fields,'' in \emph{International Conference on Computer Vision (ICCV)}, 2021.

\bibitem{mipnerf}
J.~T. Barron, B.~Mildenhall, M.~Tancik, P.~Hedman, R.~Martin-Brualla, and P.~P. Srinivasan, ``Mip-{NeRF}: A multiscale representation for anti-aliasing neural radiance fields,'' \emph{International Conference on Computer Vision (ICCV)}, 2021.

\bibitem{mipnerf360}
J.~T. Barron, B.~Mildenhall, D.~Verbin, P.~P. Srinivasan, and P.~Hedman, ``Mip-{NeRF}360: Unbounded anti-aliased neural radiance fields,'' \emph{IEEE/CVF Conference on Computer Vision and Pattern Recognition (CVPR)}, 2022.

\bibitem{toitCC}
N.~E. Du~Toit and J.~W. Burdick, ``Probabilistic collision checking with chance constraints,'' \emph{IEEE Transactions on Robotics}, vol.~27, no.~4, pp. 809--815, 2011.

\bibitem{blackmoreCC}
L.~Blackmore, M.~Ono, and B.~C. Williams, ``Chance-constrained optimal path planning with obstacles,'' \emph{IEEE Transactions on Robotics}, vol.~27, no.~6, pp. 1080--1094, 2011.

\bibitem{zhuCC}
H.~Zhu and J.~Alonso-Mora, ``Chance-constrained collision avoidance for {MAV}s in dynamic environments,'' \emph{IEEE Robotics and Automation Letters (RA-L)}, vol.~4, no.~2, pp. 776--783, 2019.

\bibitem{ludersCC}
B.~Luders, M.~Kothari, and J.~How, ``Chance constrained {RRT} for probabilistic robustness to environmental uncertainty,'' \emph{AIAA Guidance, Navigation, and Control Conference}, Aug. 2010.

\bibitem{schonberger2016structure}
J.~L. Schonberger and J.-M. Frahm, ``Structure-from-motion revisited,'' in \emph{IEEE/CVF Conference on Computer Vision and Pattern Recognition (CVPR)}, 2016.

\bibitem{Kingman:1993}
J.~F.~C. Kingman, \emph{\BIBforeignlanguage{English}{Poisson processes}}, ser. Oxf. Stud. Probab.\hskip 1em plus 0.5em minus 0.4em\relax Oxford: Clarendon Press, 1993, vol.~3.

\bibitem{chiu2013}
S.~N. Chiu, D.~Stoyan, W.~Kendall, and J.~Mecke, ``Point processes {{I}} \textemdash{} {{The Poisson}} point process,'' in \emph{Stochastic {{Geometry}} and Its {{Applications}}}.\hskip 1em plus 0.5em minus 0.4em\relax {John Wiley \& Sons, Ltd}, 2013, ch.~2, pp. 35--63.

\bibitem{max-optical}
P.~L. Williams and N.~Max, ``A volume density optical model,'' in \emph{Proceedings of the 1992 Workshop on Volume Visualization}.\hskip 1em plus 0.5em minus 0.4em\relax Association for Computing Machinery (ACM), 1992, p. 61–68.

\bibitem{ppp-entropy}
F.~Baccelli and J.~O. Woo, ``On the entropy and mutual information of point processes,'' in \emph{IEEE International Symposium on Information Theory (ISIT)}, 2016, pp. 695--699.

\bibitem{vaart_1998}
A.~W. v.~d. Vaart, \emph{Asymptotic Statistics}, ser. Cambridge Series in Statistical and Probabilistic Mathematics.\hskip 1em plus 0.5em minus 0.4em\relax Cambridge University Press, 1998.

\bibitem{lozano1990spatial}
T.~Lozano-Perez, \emph{Spatial planning: A configuration space approach}.\hskip 1em plus 0.5em minus 0.4em\relax Springer, 1990.

\bibitem{liu-safe-flight-corridors}
S.~Liu, M.~Watterson, K.~Mohta, K.~Sun, S.~Bhattacharya, C.~J. Taylor, and V.~Kumar, ``Planning dynamically feasible trajectories for quadrotors using safe flight corridors in {3D} complex environments,'' \emph{IEEE Robotics and Automation Letters (RA-L)}, vol.~2, no.~3, pp. 1688--1695, 2017.

\bibitem{toumieh-convexdecomposition}
C.~Toumieh and A.~Lambert, ``Voxel-grid based convex decomposition of {3D} space for safe corridor generation,'' \emph{Journal of Intelligent and Robotic Systems}, vol. 105, no.~4, 2022.

\bibitem{joybernstein}
\BIBentryALTinterwordspacing
K.~I. Joy, ``Bernstein polynomials.'' [Online]. Available: \url{\url{http://www.inf.ufsc.br/~aldo.vw/grafica/apostilas/Bernstein-Polynomials.pdf}}
\BIBentrySTDinterwordspacing

\bibitem{gasparetto2007new}
A.~Gasparetto and V.~Zanotto, ``A new method for smooth trajectory planning of robot manipulators,'' \emph{Mechanism and Machine Theory}, vol.~42, no.~4, pp. 455--471, Apr. 2007.

\bibitem{mellinger_minsnap}
D.~Mellinger and V.~Kumar, ``Minimum snap trajectory generation and control for quadrotors,'' in \emph{IEEE International Conference on Robotics and Automation (ICRA)}, 2011, pp. 2520--2525.

\bibitem{dijkstra3d}
``dijkstra3d,'' 2021, https://github.com/seung-lab/dijkstra3d.

\bibitem{cleach2023differentiable}
S.~Le~Cleac'h, H.-X. Yu, M.~Guo, T.~Howell, R.~Gao, J.~Wu, Z.~Manchester, and M.~Schwager, ``Differentiable physics simulation of dynamics-augmented neural objects,'' \emph{IEEE Robotics and Automation Letters (RA-L)}, vol.~8, no.~5, pp. 2780--2787, 2023.

\end{thebibliography}

\end{document}